\pgfplotsset{compat=1.5}
\theoremstyle{definition}
\theoremstyle{theorem}
\newtheorem{theorem}{Theorem}[section]
\theoremstyle{lemma}
\newtheorem{lemma}{Lemma}[section]
\theoremstyle{remark}
\newcommand{\R}{\mathbb{R}}
\newcommand{\E}{\mathbb{E}}
\newcommand{\GG}{\mathcal{G}}
\newcommand{\HH}{\mathcal{H}}
\newcommand{\MM}{\mathcal{M}}
\newcommand{\NN}{\mathcal{N}}
\newcommand{\OO}{\mathcal{O}}
\newcommand{\RR}{\mathcal{R}}
\newcommand{\XX}{\mathcal{X}}
\newcommand{\YY}{\mathcal{Y}}
\newcommand{\ZZ}{\mathcal{Z}}
\newcommand{\TF}{\mathcal{TF}}
\newcommand{\Cov}{\mbox{Cov}}
\newcommand{\iid}{\stackrel{iid}{\sim}}
\newcommand{\pto}{\overset{p}{\to}}
\newcommand{\dto}{\overset{d}{\to}}
\begin{document}

% If your paper is accepted and the title of your paper is very long,
% the style will print as headings an error message. Use the following
% command to supply a shorter title of your paper so that it can be
% used as headings.
%
%\runningtitle{I use this title instead because the last one was very long}

% If your paper is accepted and the number of authors is large, the
% style will print as headings an error message. Use the following
% command to supply a shorter version of the authors names so that
% they can be used as headings (for example, use only the surnames)
%
\runningauthor{Kirchler, Khorasani, Kloft, Lippert}

\twocolumn[

\aistatstitle{Two-sample Testing Using Deep Learning}

\aistatsauthor{Matthias Kirchler$^{1,2}$ \quad Shahryar Khorasani$^1$ \quad Marius Kloft$^{2,3}$ \quad Christoph Lippert$^{1,4}$}
\iffalse
\aistatsauthor{Matthias Kirchler} %, Shahryar Khorasani, Marius Kloft, Christoph Lippert}
\aistatsauthor{Shahryar Khorasani}
\aistatsauthor{Marius Kloft}
\aistatsauthor{Christoph Lippert}
\fi
\aistatsaddress{
$^1$Hasso Plattner Institute for Digital Engineering, University of Potsdam,  Germany
\\
$^2$Technical University of Kaiserslautern, Germany
\\
$^3$University of Southern California, Los Angeles, United States
\\
$^4$Hasso Plattner Institute for Digital Health at Mount Sinai, New York, United States
}
]

\begin{abstract}
We propose a two-sample testing procedure based on learned deep neural network representations.
To this end, we define two test statistics that perform an asymptotic location test on data samples mapped onto a hidden layer.  
The tests are consistent and asymptotically control the type-1 error rate.
Their test statistics can be evaluated in linear time (in the sample size).
Suitable data representations are obtained in a data-driven way, by solving a supervised or unsupervised transfer-learning task on an auxiliary (potentially distinct) data set.
If no auxiliary data is available, we split the data into two chunks: one for learning representations and one for computing the test statistic. 
In experiments on audio samples, natural images and three-dimensional neuroimaging data our tests yield significant decreases in type-2 error rate (up to 35 percentage points) compared to state-of-the-art two-sample tests such as kernel-methods and classifier two-sample tests.\footnote{We provide code at \url{https://github.com/mkirchler/deep-2-sample-test} }
\end{abstract}

\section{INTRODUCTION}
For almost a century, statistical hypothesis testing has been one of the main methodologies in statistical inference~\citep{neyman1933problem}.
A classic problem is to validate whether two sets of observations are drawn from the same distribution (null hypothesis) or not (alternative hypothesis).
This procedure is called \emph{two-sample test}.% (or homogeneity test).   %MK: weitere Bezeichnungen des tests irrelevant für paper

Two-sample tests are a pillar of applied statistics and a standard method for analyzing empirical data in the sciences, e.g., medicine, biology, psychology, and social sciences. 
In machine learning, two-sample tests have been used to evaluate generative adversarial networks~\citep{binkowski2018demystifying},
to test for covariate shift in data~\citep{zhou2016hypothesis}, and to infer causal relationships~\citep{lopez2016revisiting}. % and in video anomaly detection \comment{cite Liu}.

There are two main types of two-sample tests: parametric and non-parametric ones.
Parametric two-sample tests, such as the Student's $t$-test, make strong assumptions on the distribution of the data (e.g. Gaussian).
This allows us to compute p-values in closed form.
However, parametric tests may fail when their assumptions on the data distribution are invalid.
Non-parametric tests, on the other hand, make no distributional assumptions and thus could potentially be applied in a wider range of application scenarios.
Computing non-parametric test statistics, however, can be costly as it may require applying re-sampling schemes or computing higher-order statistics.

A non-parametric test that gained a lot of attention in the machine-learning community is the kernel two-sample test and its 
test statistic: the maximum mean discrepancy (MMD). 
MMD computes the average distance of the two samples mapped into the reproducing kernel Hilbert space (RKHS) of a universal kernel (e.g., Gaussian kernel). 
MMD critically relies on the choice of the feature representation (i.e., the kernel function) and thus might fail for complex, structured data such as sequences or images, and other data where deep learning excels.

% ggf paragraph wegmachen
Another non-parametric two-sample test is the classifier two-sample test (C2ST).
C2ST splits the data into two chunks, training a classifier on one part and evaluating it on the remaining data.
If the classifier predicts significantly better than chance, the test rejects the null hypothesis. 
Since a part of the data set needs to be put aside for training, not the full data set is used for computing the test statistic,
which limits the power of the method. Furthermore, the performance of the method depends on the selection of the train-test split.
%All these approaches have different drawbacks.
%Parametric assumptions are oftentimes not satisfied on real-world data and traditional non-parametric tests usually are either not powerful enough or can only be applied to one-dimensional data.
%Kernel tests are extremely versatile, making them applicable to a wide range of settings; however, kernel tests suffer severely from the curse of dimensionality TODO cite reddi and are hence less powerful on many real-world problems.
%C2STs can incorporate transfer-learned deep representations for many types of real-world data, but they lose statistical power due to the train/test split as explained in section TODO and can be prohibitive to evaluate for large data sets.

In this work, we propose a two-sample testing procedure that uses deep learning to obtain a suitable data representation.
It first maps the data onto a hidden-layer of a deep neural network that was trained (in an unsupervised or supervised fashion) on an independent, auxiliary data set,
and then it performs a location test.
Thus we are able to work on any kind of data that neural networks can work on, such as audio, images, videos, time-series, graphs, and natural language.
We propose two test statistics that can be evaluated in linear time (in the number of observations), based on MMD and Fisher discriminant analysis, respectively.
We derive asymptotic distributions of both test statistics.
Our theoretical analysis proves that the two-sample test procedure asymptotically controls the type-1 error rate, has asymptotically vanishing type-2 error rate and is robust both with respect to transfer learning and approximate training.

We empirically evaluate the proposed methodology in a variety of applications from the domains of computational musicology, computer vision, and neuroimaging.
In these experiments, the proposed deep two-sample tests consistently outperform the closest competing method (including deep kernel methods and C2STs) by up to 35 percentage points in terms of the type-2 error rate, while properly controlling the type-1 error rate.
%\comment{Matthias: bitte Prozentzahl einfügen und [optional] ]ggf bemerkenswerte Aspekte vom Datenset einfügen, zb falls tolles Datenset}
%(2) make use of powerful feature extraction techniques that are known to outperform kernel methods in many real-world applications;
%and (3) are not dependent on train/test data splitting, which severely handicaps C2STs and some kernel methods, as shown in section TODO.
%The tests presented in this work can be considered variations of kernel tests such as the MMD, with a transfer-learned deep neural network kernel.
%Our \textbf{contributions} in this paper include as follows:
%\begin{itemize}
%    \item We introduce and analyze two simple test statistics that are based on representation learning in deep neural networks and can be efficiently evaluated (linear time in the number of observations).
    %\item We analyze the asymptotic distribution of those test statistics, showing that the type-1 error rate is asymptotically controlled and that a weak consistency property holds.
%    \item We evaluate the proposed methods on a variety of real-world data and show that they consistently outperform both kernel methods and classifier two-sample tests.
%\end{itemize}
\section{PROBLEM STATEMENT \& NOTATION}
\label{sec:setting}
We consider non-parametric two-sample statistical testing, that is, to answer the question whether two samples are drawn from the same (unknown) distribution or not.
We distinguish between the case that the two samples are drawn from the same distribution (the null hypothesis, denoted by $H_0$) and the case that the samples are drawn from different distributions (the alternative hypothesis $H_1$).

We differentiate between type-1 errors (i.e,rejecting the null hypothesis although it holds) and type-2 errors (i.e., not rejecting $H_0$ although it does not hold).
We strive for both the type-1 error rate to be upper bounded by some significance level $\alpha$, and the type-2 error rate to converge to 0 for unlimited data.
The latter property is called consistency and means that with sufficient data, the test can reliably distinguish between any pair of probability distributions.

Let $p, q, p'$ and $q'$ be probability distributions on $\R^d$ with common dominating Borel measure $\mu$.
We abuse notation somewhat and denote the densities with respect to $\mu$ also by $p, q, p'$ and $q'$.
We want to perform a two-sample test on data drawn from $p$ and $q$, i.e. we test the null hypothesis $H_0: p = q$ against the alternative $H_1: p \neq q$.
$p'$ and $q'$ are assumed to be in some sense similar to $p$ and $q$, respectively, and act as auxiliary task for tuning the test (the case of $p=p'$ and $q=q'$ is perfectly valid, in which case this is equivalent to a data splitting technique).

We have access to four (independent) sets $\XX_n, \YY_n, \XX_{n'}'$, and $\YY_{n'}'$ of observations drawn from $p, q, p'$, and $q'$, respectively. Here $\XX_n = \{X_1, \ldots, X_n\} \subset \R^d$ and $X_i \sim p$ for all $i$ (analogue definitions hold for $\YY_n, \XX_{n'}'$, and  $\YY_{n'}'$).
Empirical averages with respect to a function $f$ are denoted by $\overline{f(\XX_n)} := \frac{1}{n}\sum_{i=1}^n f(X_i)$. 

We investigate function classes of deep ReLU networks with a final $\tanh$ activation function:
\begin{align*}
    \TF_N & := \left\{ \tanh \circ  W_{D-1} \circ \sigma \circ \ldots \circ \sigma \circ W_1 : \R^d \to \R^H \right| 
    \\
    & W_1 \in \R^{H\times d}, W_j \in \R^{H\times H} \mbox{ for } j=2, \ldots, D-1,
    \\
    & \left. \prod_{j=1}^{D-1}||W_j||_{Fro} \leq \beta_N, D\leq D_N
     \right\}
\end{align*}
Here, the activation functions $\tanh$ and $\sigma(z) := \mbox{ReLU}(z) = \max(0, z)$ are applied elementwise, $||\cdot||_{Fro}$ is the Frobenius norm, $H = d + 1$ is the width and $D_N$ and $\beta_N$ are depth and weight restrictions onto the networks.
This can be understood as the mapping onto the last hidden layer of a neural network concatenated with a $\tanh$ activation.

\section{DEEP TWO-SAMPLE TESTING}
\label{sec:deeptesting}

In this section, we propose two-sample testing based on two novel test statistics, the \textbf{Deep Maximum Mean Discrepancy (DMMD)} and the \textbf{Deep Fisher Discriminant Analysis (DFDA)}.
The test asymptotically controls the type-1 error rate, and it is consistent (i.e., the type-2 error rate converges to 0).
Furthermore, we will show that consistency is preserved under both transfer learning on a related task, as well as only approximately solving the training step.

\subsection{Proposed Two-sample Test}
\label{sec:testingprocedure}

Our proposed test consists of the following two steps.
1. We train a neural network over an auxiliary \emph{training} data set. 2. We then evaluate the maximum mean discrepancy test statistic \citep{gretton2012kernel} (or a variant of it) using as kernel the mapping from the input domain onto the network's last hidden layer.

\subsubsection{Training Step}
Let the training data be $\XX_{n'}'$ and $\YY_{m'}'$. Denote $N = n' + m'$.
We run a (potentially inexact) training algorithm to find $\phi_N \in \TF_N$ with: 
\begin{align*}
    &  \left|\left| \frac{1}{N} \left( \sum_{i=1}^{n'} \phi_{N}(X_i') - \sum_{i=1}^{m'} \phi_{N}(Y_i') \right) \right|\right| + \eta
    \\
    & \ge \max_{\phi \in \TF_{N}} \left|\left| \frac{1}{N} \left( \sum_{i=1}^{n'} \phi(X_i') - \sum_{i=1}^{m'}\phi(Y_i') \right) \right|\right|.
\end{align*}
Here, $\eta \ge 0$ is a fixed leniency parameter (independent of $N$);
finding true global optima in neural networks is a hard problem, and an $\eta > 0$ allows us to settle with good-enough, local solutions. 
This procedure is also related to the early-stopping regularization technique, which is commonly used in training deep neural networks \citep{prechelt1998early}.

\subsubsection{Test Statistic}
We define the mean distance of the two test populations $\XX_n, \YY_m$ measured on the hidden layer of a network $\phi$ as 
\begin{align*}
    D_{n,m}(\phi) := \overline{\phi(\XX_n)} - \overline{\phi(\YY_m)}.
\end{align*}
Using $\phi_{N}$ from the training step, we define the Deep Maximum Mean Discrepancy (DMMD) test statistic as
\begin{align*}
    S_{n,m}(\phi_N, \XX_n, \YY_m) := \frac{nm}{n+m}\left|\left| D_{n,m}(\phi_N)\right|\right|^2.
\end{align*}
We can normalize this test statistic by the (inverse) empirical covariance matrix:
\begin{align*}
    &T_{n,m}(\phi_N, \XX_n, \YY_m) := \frac{nm}{n+m} D_{n,m}(\phi_N)^\top \hat{\Sigma}_{n,m}^{-1} D_{n,m}(\phi_N).
\end{align*}
This leads to a test statistic (which we call Deep Fisher Discriminant Analysis---DFDA) with an asymptotic distribution that is easier to evaluate.
Note that the empirical covariance matrix is defined as:
\begin{align*}
    &\hat{\Sigma}_{n,m} := \hat{\Sigma}_{n,m}(\phi_N) := 
    \\
    & \frac{1}{n+m-1}\sum_{i=1}^{m+n}(\phi_{N}(Z_i)- \overline{\phi_{N}(\ZZ)})(\phi_{N}(Z_i)- \overline{\phi_{N}(\ZZ)})^\top
    \\
    & + \rho_{n,m}I,
\end{align*}
where $\rho_{n,m} > 0$ is a factor guaranteeing numerical stability and invertibility of the covariance matrix, and $\ZZ = \{Z_1, \ldots, Z_{m+n}\} = \{ X_1, \ldots, X_n, Y_1, \ldots, Y_m\}$.

\subsubsection{Discussion}
\label{sec:testing-discussion}
Intuitively, we map the data onto the last hidden layer of the neural network and perform a multivariate location test on whether both map to the same location.
If the distance $D_{n,m}$ between the two means is too large, we reject the hypothesis that both samples are drawn from the same distribution.
Consistency of this procedure is guaranteed by the training step.

\paragraph{Interpretation as Empirical Risk Minimization} 
If we identify $X_i'$ with $(Z_i', 1)$ and $Y_{i}'$ with $(Z_{n'+i}', -1)$ in a regression setting, this is equivalent to an (inexact) empirical risk minimization with loss function $L(t, \hat{t}) = 1 - t\hat{t}$:
\begin{align*}
    \max_{\phi} \left|\left|\frac{1}{N}\sum_{i=1}^N t_i' \phi(Z_i')\right|\right|
    = \max_{\phi} \max_{||w||\leq 1} \frac{1}{N}\sum_{i=1}^N t_i' w^\top \phi(Z_i'),
\end{align*}
which is equivalent to
\begin{align}
    \label{eq:erm}
    \min_{\phi}\min_{||w||\leq 1} R_N'(w^\top\phi) := \frac{1}{N} \sum_{i=1}^N L(t_i',  w^\top \phi(Z_i')),
\end{align}
where we denote by $R_N'$ the empirical risk; the corresponding expected risk is $R'(f) = \E[1 - t'f(Z')]$.
Assuming that $\Pr(t'=1) = \Pr(t'=-1) = \frac{1}{2}$, we have for the Bayes risk $R'^* = \inf_{f:\R^d\to[-1,1]} R'(f) = 1 - \epsilon'$ with $\epsilon' > 0$ if and only if $p' \neq q'$.
As long as $p'$ and $q'$ are selected close enough to $p$ and $q$, respectively, the corresponding test will be able to distinguish between the two distributions.

Since we discard $w$ after optimization and use the norm of the hidden layer on the test set again, this implies some fine-tuning on the test data, without compromising the test statistic (see Theorem~\ref{thm-pvals} below).
This property is especially helpful in neural networks, since for practical transfer learning, only fine-tuning the last layer can be extremely efficient, even if the transfer and actual task are relatively different \citep{lu2015transfer}.

\paragraph{Relation to kernel-based tests}
The test statistic $S_{n,m}$ is a special case of the standard squared Maximum Mean Discrepancy \citep{gretton2012optimal} with the kernel $k(z_1, z_2) := \langle \phi(z_1), \phi(z_2) \rangle$ (analogously for $T_{n,m}$ and the Kernel FDA Test \citep{harchaoui2008testing}).
For a fixed feature map $\phi$ this kernel is not characteristic, and hence the resulting test not necessarily consistent for arbitrary distributions $p, q$.
However, by first choosing $\phi$ in a data-dependent way, we can still achieve consistency.

%The strength of this approach lies in the potential to use all available testing data for the actual test by selecting the network on an independent transfer task and still get a consistent two-sample test.
%If no appropriate transfer data are available, one can still use a data splitting approach.
%The former robustness result allows us to use all test observations for the actual test statistic, while the latter counteracts the problem of local optima and non-convexity in the training of neural networks.

\subsection{Control of Type-1 Error}
Due to our choice of $\phi_N$, there need not be a unique, well-defined limiting distribution for the test statistics when $n,m\to\infty$.
Instead, we will show that for each \emph{fixed} $\phi$, the test statistic $S_{n,m}$ has a well-defined limiting distribution that can be well evaluated.
If in addition the covariance matrix is invertible, then the same holds for $T_{n,m}$.

In particular, the following theorem will show that $D_{n,m}(\phi)$ converges towards a multivariate normal distribution for $n,m\to\infty$.
$S_{n,m}$ then is asymptotically distributed like a weighted sum of $\chi^2$ variables, and $T_{n,m}$ like a $\chi^2_H$ (again, if well-defined).

\begin{theorem}
\label{thm-pvals}
Let $p = q$, $\phi\in\TF$ and $\Sigma := \Cov(\phi(X_1))$ and assume that $\frac{n}{n+m}\to r\in (0,1)$ as $n, m \to \infty$.
\begin{itemize}
    \item[(i)] As $n, m \to \infty$, it holds that
        \begin{align*}
            \sqrt{\frac{mn}{m+n}}D_{n,m}(\phi) \dto \NN(0, \Sigma).
        \end{align*}
    \item[(ii)]  As $n, m \to \infty$,
        \begin{align*}
            S_{n,m}(\phi, \XX_n, \YY_m) \dto \sum_{i=1}^H \lambda_i \xi_i^2,
        \end{align*}
        where $\xi_i \iid \NN(0,1)$ and $\lambda_i$ are the eigenvalues of $\Sigma$.
    \item[(iii)] If additionally $\Sigma$ is invertible, and $\rho_{n,m} \downarrow 0$ then as $n,m\to\infty$
        \begin{align*}
            T_{n,m}(\phi, \XX_n, \YY_m) \dto \chi_H^2.
        \end{align*}
\end{itemize}
\end{theorem}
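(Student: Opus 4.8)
The plan is to derive everything from part~(i), which is a two-sample central limit theorem; parts (ii) and (iii) then follow by the continuous mapping theorem and Slutsky's lemma. The only structural feature of $\TF$ I will need is that the final $\tanh$ activation forces every $\phi\in\TF$ to map into $[-1,1]^H$, so that $\phi(X_1)$ is a bounded random vector with a finite mean $\nu:=\E[\phi(X_1)]$, finite covariance $\Sigma$, and finite higher moments. Everything below uses this to avoid any moment conditions on $p$.

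For part~(i): under $H_0$ the samples $\XX_n$ and $\YY_m$ are independent and consist of i.i.d.\ draws from $p=q$, so the multivariate CLT applied to each half gives $\sqrt{n}\,(\overline{\phi(\XX_n)}-\nu)\dto\NN(0,\Sigma)$ and $\sqrt{m}\,(\overline{\phi(\YY_m)}-\nu)\dto\NN(0,\Sigma)$, and these two sequences are independent. Since the common mean cancels, I would write
\[
\sqrt{\frac{nm}{n+m}}\, D_{n,m}(\phi)\;=\;\sqrt{\frac{m}{n+m}}\,\cdot\sqrt{n}\,(\overline{\phi(\XX_n)}-\nu)\;-\;\sqrt{\frac{n}{n+m}}\,\cdot\sqrt{m}\,(\overline{\phi(\YY_m)}-\nu),
\]
observe that $\frac{n}{n+m}\to r$ forces the two deterministic prefactors to converge to $\sqrt{1-r}$ and $\sqrt{r}$, and conclude via Slutsky and independence of the two terms that the right-hand side converges to $\NN(0,(1-r)\Sigma)$ minus an independent $\NN(0,r\Sigma)$, i.e.\ to $\NN(0,\Sigma)$.

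Part~(ii) is immediate once (i) is in hand: $S_{n,m}(\phi,\XX_n,\YY_m)=\bigl\|\sqrt{\tfrac{nm}{n+m}}\,D_{n,m}(\phi)\bigr\|^2$, so by the continuous mapping theorem it converges in distribution to $\|Z\|^2$ for $Z\sim\NN(0,\Sigma)$; writing $\Sigma=U\Lambda U^\top$ with $U$ orthogonal and $\Lambda$ the diagonal matrix of eigenvalues $\lambda_1,\dots,\lambda_H$, the vector $U^\top Z\sim\NN(0,\Lambda)$ has coordinates $\sqrt{\lambda_i}\,\xi_i$ with $\xi_i\iid\NN(0,1)$, and $\|Z\|^2=\|U^\top Z\|^2=\sum_{i=1}^H\lambda_i\xi_i^2$. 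For part~(iii): under $H_0$ the pooled sample $\ZZ$ is i.i.d.\ from $p$, so the unregularised pooled empirical covariance $\hat{\Sigma}_{n,m}-\rho_{n,m}I$ converges in probability (indeed a.s., by the strong law, using boundedness of $\phi$) to $\Sigma$; since $\rho_{n,m}\downarrow 0$ this yields $\hat{\Sigma}_{n,m}\pto\Sigma$, and invertibility of $\Sigma$ gives $\hat{\Sigma}_{n,m}^{-1}\pto\Sigma^{-1}$ by continuous mapping (the regulariser also guarantees $\hat{\Sigma}_{n,m}^{-1}$ is well defined for every finite $n,m$). Combining the weak limit from~(i) with this convergence in probability (Slutsky, then the map $(z,A)\mapsto z^\top A^{-1}z$, continuous at the invertible point $\Sigma$) gives $T_{n,m}(\phi,\XX_n,\YY_m)\dto Z^\top\Sigma^{-1}Z$ with $Z\sim\NN(0,\Sigma)$, and since $\Sigma^{-1/2}Z\sim\NN(0,I_H)$ we get $Z^\top\Sigma^{-1}Z=\|\Sigma^{-1/2}Z\|^2\sim\chi_H^2$.

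Most of this is bookkeeping and I expect no serious obstacle; the two places that need a little care are keeping the two ratio prefactors straight in~(i) while correctly invoking independence of the two half-sample CLTs, and making sure the Slutsky step in~(iii) rests on genuine joint convergence --- which is legitimate precisely because the covariance estimator's limit $\Sigma$ is deterministic. No delicate probabilistic estimate enters, which is exactly what the bounded $\tanh$ output buys us; the substantive content of the paper (consistency, and robustness to transfer learning and to the leniency $\eta$) lies elsewhere, since here $\phi$ is held fixed.
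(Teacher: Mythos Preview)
Your proposal is correct and follows essentially the same route as the paper's proof: decompose $\sqrt{\tfrac{nm}{n+m}}D_{n,m}(\phi)$ into two independent centered sums with deterministic prefactors converging to $\sqrt{1-r}$ and $\sqrt{r}$, apply the multivariate CLT to each (boundedness via the final $\tanh$ supplies the moment conditions), then deduce (ii) by continuous mapping plus an orthogonal diagonalisation of $\Sigma$, and (iii) by the weak law for $\hat\Sigma_{n,m}$ combined with Slutsky. Your exposition is in fact slightly more careful than the paper's in tracking which prefactor goes with which variance share and in justifying the matrix-inverse step, but there is no substantive difference in approach.
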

\begin{proof}[Sketch of proof (full proof in Appendix~\ref{app-proofs-pvals})]
\emph{(i)} As under $H_0$ $\phi(X_i)$ and $\phi(Y_j)$ are identically distributed, $D_{n,m}(\phi)$ is centered and one can show the result using a Central Limit Theorem.

\emph{(ii)} and \emph{(iii)} then follow from the continuous mapping theorem and properties of the multivariate normal distribution.
\end{proof}

Under some additional assumptions we can also use a Berry-Esseen type of result to quantify the quality of the normal approximation of $D_{n,m}(\phi_N)$ conditioned on the training.
In particular, if we assume that $n = m$ and $\Sigma = \Cov_{p,q}(\phi_N(X_1)) | \XX_n', \YY_n'$ invertible, then \citet{bentkus2005lyapunov} shows that the normal approximation on convex sets is $\OO\left(\frac{H^{1/4}}{\sqrt{n}}\right)$.
Computing p-values for both $S_{n,n}$ and $T_{n,n}$ only requires computation over convex sets, so the result is directly applicable.

\subsubsection{Computational Aspects}
\label{sec-computationpvals}
\paragraph{Testing with $S_{n,m}$} As shown in Theorem~\ref{thm-pvals}, the null distribution of $S_{n,m}$ can be approximated as the weighted sum of independent $\chi^2$-variables.
There are several approaches to computing the cumulative distribution function of this distribution, see \citet{bausch2013efficient} for an overview and \citet{zhou2018null} for an implementation.
However, computing p-values with this method can be rather costly.

Alternatively, note that the test statistic $S_{n,m}$ is linear in the number of observations and dimensions.
Hence, estimating the null distribution via Monte-Carlo permutation sampling \citep{ernst2004permutation} is feasible.
Note also that it suffices to evaluate the feature map $\phi$ on each data point only once and then permute the class labels, saving more time.

In practice we found that the resampling-based test performed considerably faster.
Hence, in the remainder of this work, we will evaluate the null hypothesis of the DMMD via the resampling method.

\paragraph{Testing with $T_{n,m}$}
Since in many practical situations one wants to use standard neural network architectures (such as ResNets), the number of neurons in the last hidden layer $H$ may be rather large, compared to $n, m$.
Therefore, using the full, high-dimensional hidden layer representation might lead to suboptimal normal approximations.
Instead, we propose to use a principal component analysis on the feature representation $(\phi(Z_i))_{i=1}^{n+m}$ to reduce the dimensionality to $\hat{H} \ll m+n$.
In fact, this does not break the asymptotic theory derived in Theorem~\ref{thm-pvals}, even though the PCA is both trained and evaluated on the test data; details can be found in Appendix~\ref{app-pca}.
Unfortunately, the $\OO\left(\frac{H^{1/4}}{\sqrt{n}}\right)$ rate of convergence is not valid anymore, due to the observations not being independent.
We still need to grow $\hat{H}$ towards $H$ with $n, m$ in order for the consistency results in the next section to hold, however.
Empirically we found $\hat{H} = \min\left(\sqrt{\frac{n+m}{2}}, H\right)$ to perform well.

The cumulative distribution function of the $\chi_H^2$ distribution can be evaluated very efficiently.
Although for the DFDA it is also possible to estimate the null hypothesis via a Monte Carlo permutation scheme, doing so is more costly than for the DMMD, since it involves either a matrix inversion once or solving a linear system for each permutation draw.
Hence, in this work we focus on using the asymptotic distribution.

\subsection{Consistency}
In this section we show that if $(a)$, the restrictions $\beta_N, D_N$ on weights and depth of networks in $\TF_N$ are carefully chosen, $(b)$, the transfer task is not too far from the original task, and $(c)$, the leniency parameter $\eta$ in the training step is small enough, then our proposed test is consistent, meaning the type-2 error rate converges to 0.

\begin{theorem}
\label{thm-consistency}
    Let $p \neq q$, $n=n', m=m'$ with $\frac{n}{m}\to 1$, $N = n + m$, $R'^* = 1 - \epsilon'$ the Bayes error for the transfer task with $\epsilon' > 0$, and assume that the following holds:
    \begin{itemize}
        \item[(i)] $\frac{\beta_N^2 D_N}{N} \to 0$, $\beta_N \to \infty$ and $D_N \to \infty$ for $N\to\infty$ for the parameters of the function classes $\TF_N$,
        \item[(ii)] $||p - p'||_{L_1(\mu)} + ||q - q'||_{L_1(\mu)} \leq 2 \delta$,
        \item[(iii)] $0 \leq \delta + \eta < \epsilon'$, where $\eta\geq 0$ is the leniency parameter in training the network, and
        \item[(iv)] $p'$ and $q'$ have bounded support on $\R^d$.\footnote{A similar Theorem holds also for the case of unbounded support, see Appendix~\ref{sec:unbounded}}
    \end{itemize}
    Then, as $N\to\infty$ both test test statistics $S_{n,m}(\phi_N, \XX_n, \YY_m)$ and $T_{n,m}(\phi_N, \XX_n, \YY_m)$ diverge in probability towards infinity, i.e. for any $r>0$
    \begin{align*}
        & \Pr\left(S(\phi_N, \XX_n, \YY_m) > r\right) \to 1 \mbox{ and }
        \\
        & \Pr\left(T(\phi_N, \XX_n, \YY_m) > r\right) \to 1.
    \end{align*}
\end{theorem}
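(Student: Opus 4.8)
The plan is to reduce the statement to a uniform positive lower bound on the test mean discrepancy $\|D_{n,m}(\phi_N)\|$ and then to control that quantity through a chain of three comparisons: empirical means versus population means; the population ``witness norm'' that the (inexact) training step achieves versus the supremum attainable over $\TF_N$; and that supremum over $\TF_N$ for the transfer task $(p',q')$ versus the target task $(p,q)$. As a first reduction, since $\tfrac{nm}{n+m}\to\infty$ it suffices to exhibit a constant $c>0$ with $\Pr\big(\|D_{n,m}(\phi_N)\|>c\big)\to1$, which forces $S_{n,m}(\phi_N,\XX_n,\YY_m)=\tfrac{nm}{n+m}\|D_{n,m}(\phi_N)\|^2\to\infty$ in probability. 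For $T_{n,m}$, note every coordinate of any $\phi\in\TF_N$ lies in $(-1,1)$, so the regularized covariance $\hat\Sigma_{n,m}(\phi_N)$ has largest eigenvalue bounded by a constant (of order $H$, as $\rho_{n,m}$ does not grow); hence $\hat\Sigma_{n,m}^{-1}$ dominates a fixed positive multiple of the identity and $T_{n,m}\ge\mathrm{const}\cdot S_{n,m}$, so it is enough to treat $S_{n,m}$.

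For the first comparison, since $\XX_n,\YY_m$ are independent of the training samples and the outputs of networks in $\TF_N$ are uniformly bounded (by the terminal $\tanh$), a uniform law of large numbers over $\TF_N$ controls
\begin{align*}
    \sup_{\phi\in\TF_N}\Big\|\,\overline{\phi(\XX_n)}-\E_p[\phi(X_1)]\,\Big\|
    \quad\text{and}\quad
    \sup_{\phi\in\TF_N}\Big\|\,\overline{\phi(\YY_m)}-\E_q[\phi(Y_1)]\,\Big\|,
\end{align*}
together with the analogous suprema over the training samples. The Rademacher complexity of $\TF_N$ is governed, via size‑independent bounds for norm‑constrained ReLU networks, by the product‑of‑Frobenius‑norms budget $\beta_N$ and the depth $D_N$, and assumption (i), $\beta_N^2 D_N/N\to0$, drives all these suprema to $0$ in probability. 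Consequently $\big\|D_{n,m}(\phi_N)-\big(\E_p[\phi_N(X_1)]-\E_q[\phi_N(Y_1)]\big)\big\|\pto0$, and the empirical objective that $\phi_N$ approximately maximizes in the training step is, uniformly over $\TF_N$, close to its population counterpart $\tfrac12\|\E_{p'}[\phi]-\E_{q'}[\phi]\|$.

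For the remaining two comparisons, the leniency inequality defining $\phi_N$ combined with this uniform convergence on the training data yields $\|\E_{p'}[\phi_N(X_1')]-\E_{q'}[\phi_N(Y_1')]\|\ge\sup_{\phi\in\TF_N}\|\E_{p'}[\phi]-\E_{q'}[\phi]\|-2\eta-o_p(1)$. By the empirical‑risk‑minimization identity of Section~\ref{sec:testing-discussion} (see~\eqref{eq:erm}), maximizing this population witness norm over $\TF_N$ amounts to minimizing the transfer risk $R'$ over functions realizable as $w^\top\phi$; since each of the $H$ output coordinates can independently encode the Bayes decision rule, universal approximation of deep ReLU networks on the compact support of $p'$ and $q'$ (assumption (iv)), together with $\beta_N,D_N\to\infty$, shows that $\sup_{\phi\in\TF_N}\|\E_{p'}[\phi]-\E_{q'}[\phi]\|$ converges to a strictly positive limit determined by $\epsilon'=1-R'^{*}$. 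Passing from $(p',q')$ to $(p,q)$ changes each coordinate of the mean by at most $\|p-p'\|_{L_1(\mu)}$ or $\|q-q'\|_{L_1(\mu)}$ (because $|\phi_{N,j}|\le1$), so assumption (ii) bounds the transfer loss in terms of $\delta$, and assumption (iii), $\delta+\eta<\epsilon'$, is precisely what guarantees that a strictly positive margin survives, i.e. $\|\E_p[\phi_N(X_1)]-\E_q[\phi_N(Y_1)]\|\ge c>0$ with probability tending to $1$. Chaining with the first comparison gives $\|D_{n,m}(\phi_N)\|>c/2$ with probability tending to $1$, which proves the claim.

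The step I expect to be the main obstacle is the first comparison together with the internal tension in assumption (i): one needs a Rademacher‑complexity (or covering‑number) estimate for the norm‑constrained class $\TF_N$ that is genuinely size‑independent and scales like $\beta_N\sqrt{D_N}/\sqrt{N}$, while simultaneously $\TF_N$ must, as $\beta_N,D_N\to\infty$, be rich enough to approximate the Bayes‑optimal witness function in $L_1(\mu)$ --- reconciling ``small enough to generalize'' with ``large enough to be universal'' is the delicate point, and is exactly why the theorem posits the specific asymptotics in (i). A secondary subtlety is that $\phi_N$ is data‑dependent and the training and test samples grow simultaneously, so one must pass from conditional to unconditional statements through the uniform bounds rather than a plain law of large numbers.
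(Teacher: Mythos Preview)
Your overall strategy matches the paper's: reduce to a positive lower bound on the witness discrepancy, control the approximation error by universal approximation on the compact support of $p',q'$, control the estimation error on the training side by a Rademacher bound of order $\beta_N\sqrt{D_N}/\sqrt{N}$, transfer from $(p',q')$ to $(p,q)$ via the $L_1$ bound, and dominate $T_{n,m}$ from below by a constant multiple of $S_{n,m}$ through the largest eigenvalue of $\hat\Sigma_{n,m}$. The paper organizes the argument through the scalar $\psi_N=w_N^\top\phi_N$ and the risks $R,R'$, whereas you work directly with the vector norm $\|\E_{p'}\phi-\E_{q'}\phi\|$; these are dual formulations and essentially interchangeable.

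There is, however, one genuine gap in your first comparison. You invoke a uniform law of large numbers over $\TF_N$ for the \emph{test} samples $\XX_n,\YY_m$, justified by ``size-independent bounds for norm-constrained ReLU networks''. But those bounds (Theorem~1 of \citet{golowich2017size}, on which the paper's Lemma~\ref{lemma-consistency-rademacherbound} is based) scale like $B\beta_N\sqrt{D_N}/\sqrt{N}$, where $B$ is an almost-sure bound on $\|Z_i\|$---and assumption~(iv) grants bounded support only to $p',q'$, not to $p,q$. So the Rademacher argument does not control $\sup_{\phi\in\TF_N}\|\overline{\phi(\XX_n)}-\E_p\phi\|$ as stated. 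The paper sidesteps this entirely: since $\phi_N$ depends only on the training data and the test samples are independent of it, one conditions on training (so that $\phi_N$ is fixed) and applies a plain weak law of large numbers for triangular arrays to the bounded variables $t_i\psi_N(Z_i)\in[-1,1]$; the rate is uniform over realizations of $\phi_N$ because of the terminal $\tanh$, so the conditional statement lifts to an unconditional one. Your ``secondary subtlety'' paragraph correctly flags the data-dependence of $\phi_N$ but prescribes the wrong remedy---uniformity over $\TF_N$ is neither needed nor available on the test side; conditioning plus output boundedness is what makes the step go through.
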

\begin{proof}[Sketch of proof (full proof in Appendix~\ref{app-proofs-consistency})]
    The test statistics $S_{n,m}$ is lower-bounded by a rescaled version of $\sqrt{N}(1 - R_{n,m}(\psi_N))$, where $\psi_N = w_N^\top \phi_N$ with $w_N$ selected as in \eqref{eq:erm}.
    Then, if $1 - R_{n,m}(\psi_N) \geq c > 0$, the test statistic diverges.

    The finite-sample error $R_{n,m}(\psi_N)$ approaches its population version $R(\psi_N)$ for large $n, m$, and the difference between $R(\psi_N)$ and $R'(\psi_N)$ can be controlled over $\delta$.
    The rest of the proof is akin to standard consistency proofs in regression and classification.
    Namely, we can split $R_N'(\psi_N) - R'^*$ into approximation and estimation error and control these via a Universal Approximation Theorem \citep{hanin2017universal}, and Rademacher complexity bounds on the neural network function class \citep{golowich2017size}, respectively.
\end{proof}

The main caveat of Theorem~\ref{thm-consistency} is that it gives no explicit directions to choose the transfer task $p'$ and $q'$.
Whether the respective $\mu$-densities are $L_1$-close to the testing densities in general cannot be answered, and similarly the Bayes error rate $1- \epsilon'$ is not known beforehand.
If abundant data for the testing task is at hand, then splitting the data is the safe way to go; if data is scarce, Theorem~\ref{thm-consistency} gives justification that a \emph{reasonably close} transfer task will have good power as well.

The bounded support requirement \emph{(iv)} on $p'$ and $q'$ can be circumvented as well -- by choosing the support large enough one can always just truncate $(X_i')$ and $(Y_i')$ and will still satisfy requirements \emph{(ii)} and \emph{(iii)}, especially also in the case of $p' = p$ and $q' = q$ with unbounded support.
This procedure, however, requires knowledge of where to truncate the transfer distributions.
Instead one can also grow the support of $p'$ and $q'$ with $N$; for more details, see Appendix~\ref{sec:unbounded}.

\section{RELATED WORK}

In this section, we give an overview over the state-of-the-art in non-parametric two-sample testing for high-dimensional data.

\paragraph{Kernel Methods}
The methods most related to our method are the kernelized maximum mean discrepancy (MMD) \citep{gretton2012kernel} and the kernel Fisher discriminant analysis (KFDA) \citep{harchaoui2008testing}.
Both methods effectively metricize the space of probability distributions by mapping distribution features onto mean embeddings in universal reproducing kernel Hilbert spaces (RKHS, \citep{steinwart2008support}).
Test statistics derived from these mean embeddings can be efficiently evaluated using the kernel trick (in quadratic time in the number of observations, although there are lower-powered linear-time variations).
Mean Embeddings (ME) and Smoothed Characteristic Functions (SCF) \citep{chwialkowski2015fast, jitkrittum2016interpretable} 
are kernel-based linear-time test statistics that are (almost surely) proper metrics on the space of probability distributions.
All four methods rely on characteristic kernels to yield consistent tests and are closely related.

\paragraph{Deep Kernel Methods}
In the context of training and evaluating Generative Adversarial Networks (GANs), several authors have investigated the use of the MMD with kernels parametrized by deep neural networks.
In \citet{binkowski2018demystifying, li2017mmd, arbel2018gradient}, the authors feed features extracted from deep neural networks into characteristic kernels.
\citet{jitkrittum2018informative} use deep kernels in the context of relative goodness-of-fit testing without directly considering consistency aspects of this approach.
Extensions from the GAN literature to two-sample testing is not straightforward since statistical consistency guarantees strongly depend on careful selection of the respective function classes.
To the best of our knowledge, all previous works made simplifying assumptions on injectivity or even invertibility of the involved networks.

In this work we show that a linear kernel on top of transfer-learned neural network feature maps (as has also been done by \citet{xu2018empirical} for GAN evaluation) is not only sufficient for consistency of the test, but also performs considerably better empirically in all settings we analyzed.
In addition to that, our test statistics can be directly evaluated in linear instead of quadratic time (in the sample size) and the corresponding asymptotic null distributions can be exactly computed (in contrast to the MMD \& KFDA).

\paragraph{Classifier Two-Sample Tests (C2ST)} First proposed by \citet{friedman2003multivariate} and then further analyzed by \citet{kim2016classification} and \citet{lopez2016revisiting}, the idea of the C2ST is to utilize a generic classifier, such as a neural network or a $k$-nearest neighbor approach for the two-sample testing problem.
In particular, they split the available data into training and test set, train a classifier on the training set and evaluate whether the performance on the test set exceeds random variation.
The main drawback of this approach is that the data has to be split in two chunks, creating a trade-off: if the training set is too small, the classifier is unlikely to find a statistically relevant signal in the data; if the training set is large and thus the test set small, the C2ST test loses power.

Our method circumvents the need to split the data in training and test set -- Theorem~\ref{thm-consistency} shows that training on a reasonably close transfer data set is sufficient.
Even more, as shown in Section~\ref{sec:testing-discussion}, our method can be interpreted as empirical risk minimization with additional fine-tuning of the last layer on the testing data, guaranteed to be as least as good as an equivalent method with fixed last layer.

\section{EXPERIMENTS}

\begin{figure*}[ht]
\centering
    \begin{subfigure}[b]{0.39\textwidth}
        \includegraphics[width=\textwidth]{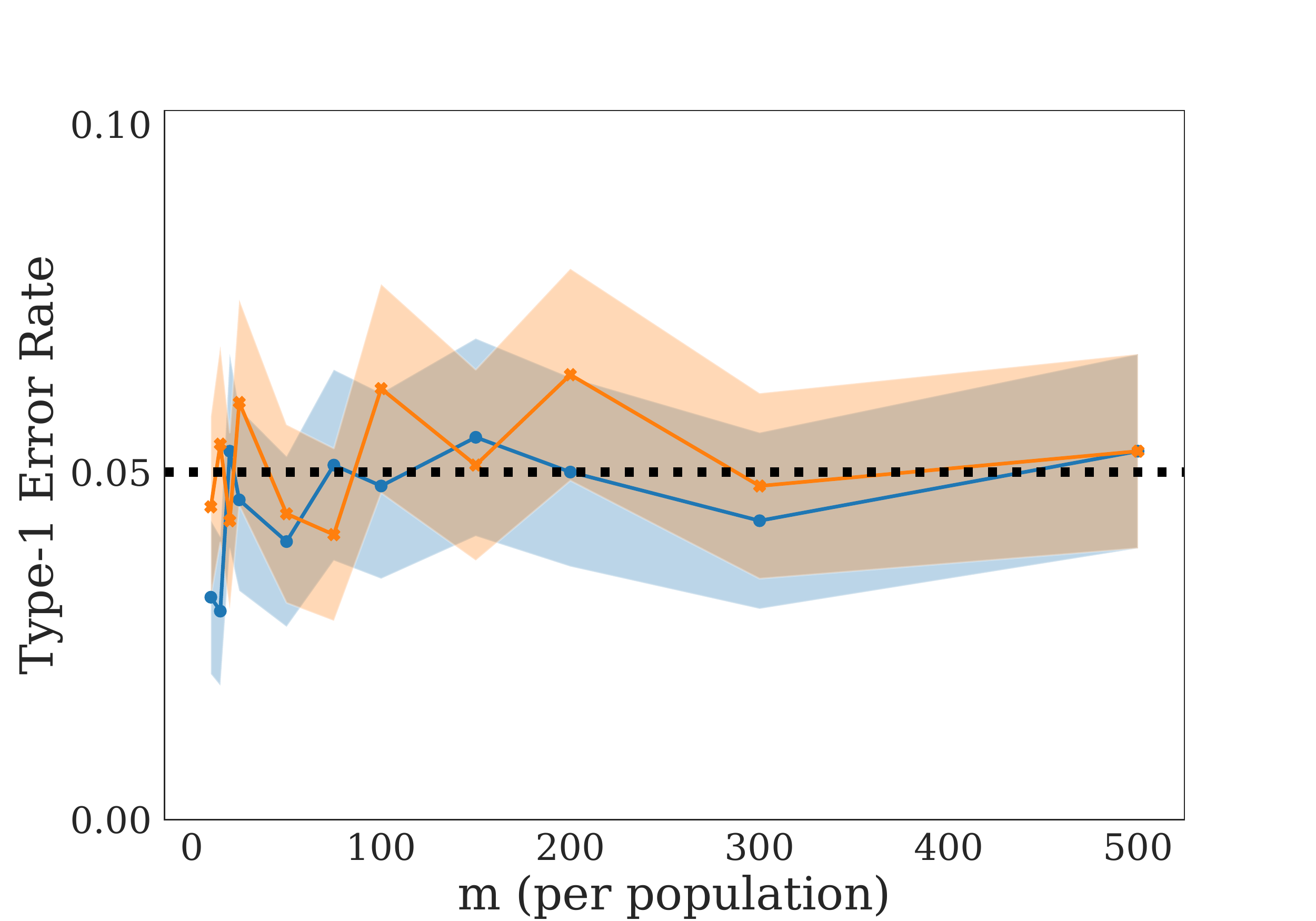}
        \caption{Type-1 error rate on AM audio data.}
        \label{fig:t1er}
    \end{subfigure}
    \begin{subfigure}[b]{0.39\textwidth}
        \includegraphics[width=\textwidth]{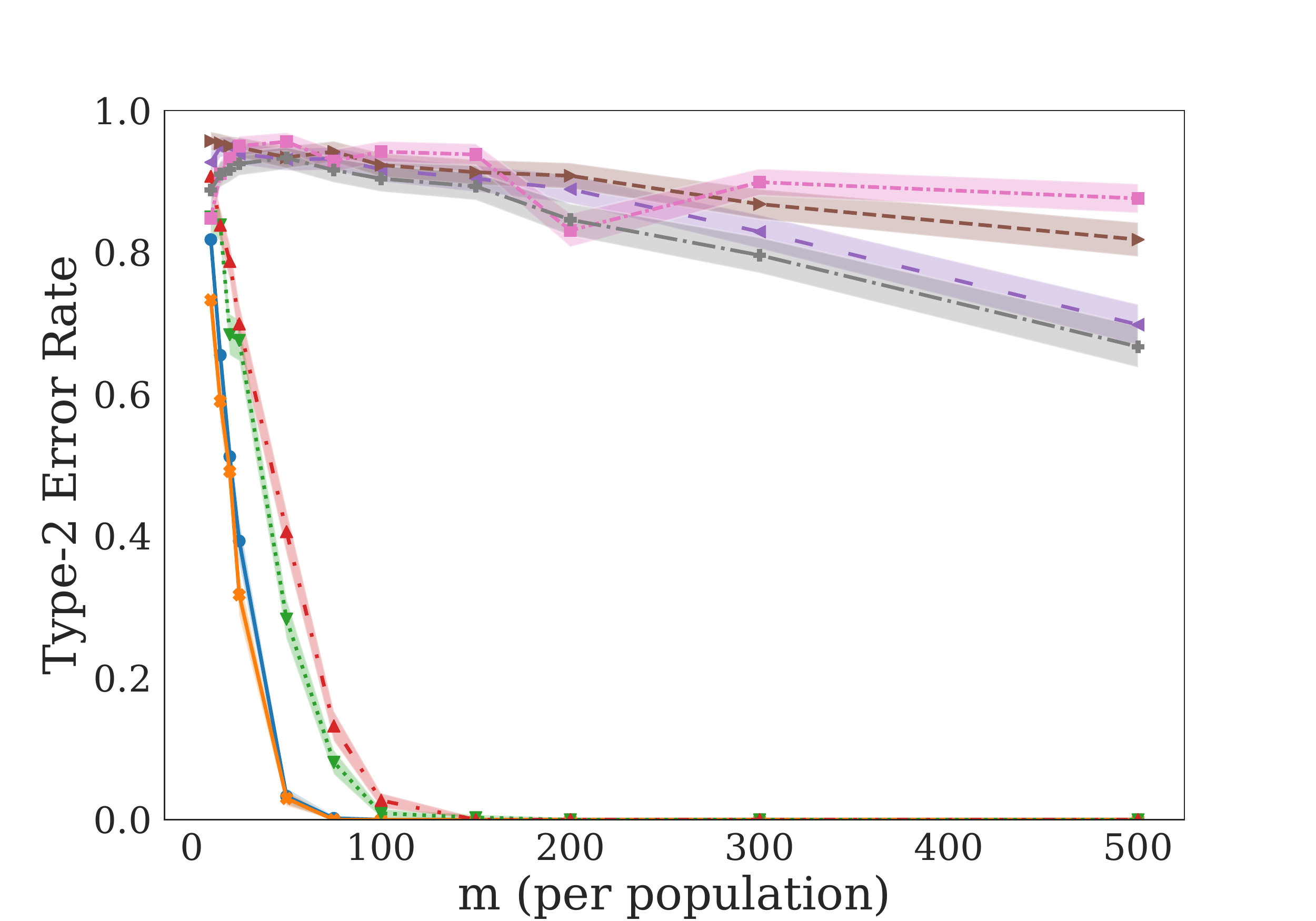}
        \caption{Type-2 error rate on AM audio data.}
        \label{fig:audio}
    \end{subfigure}
    \begin{subfigure}[b]{0.19\textwidth}
        \includegraphics[width=\textwidth]{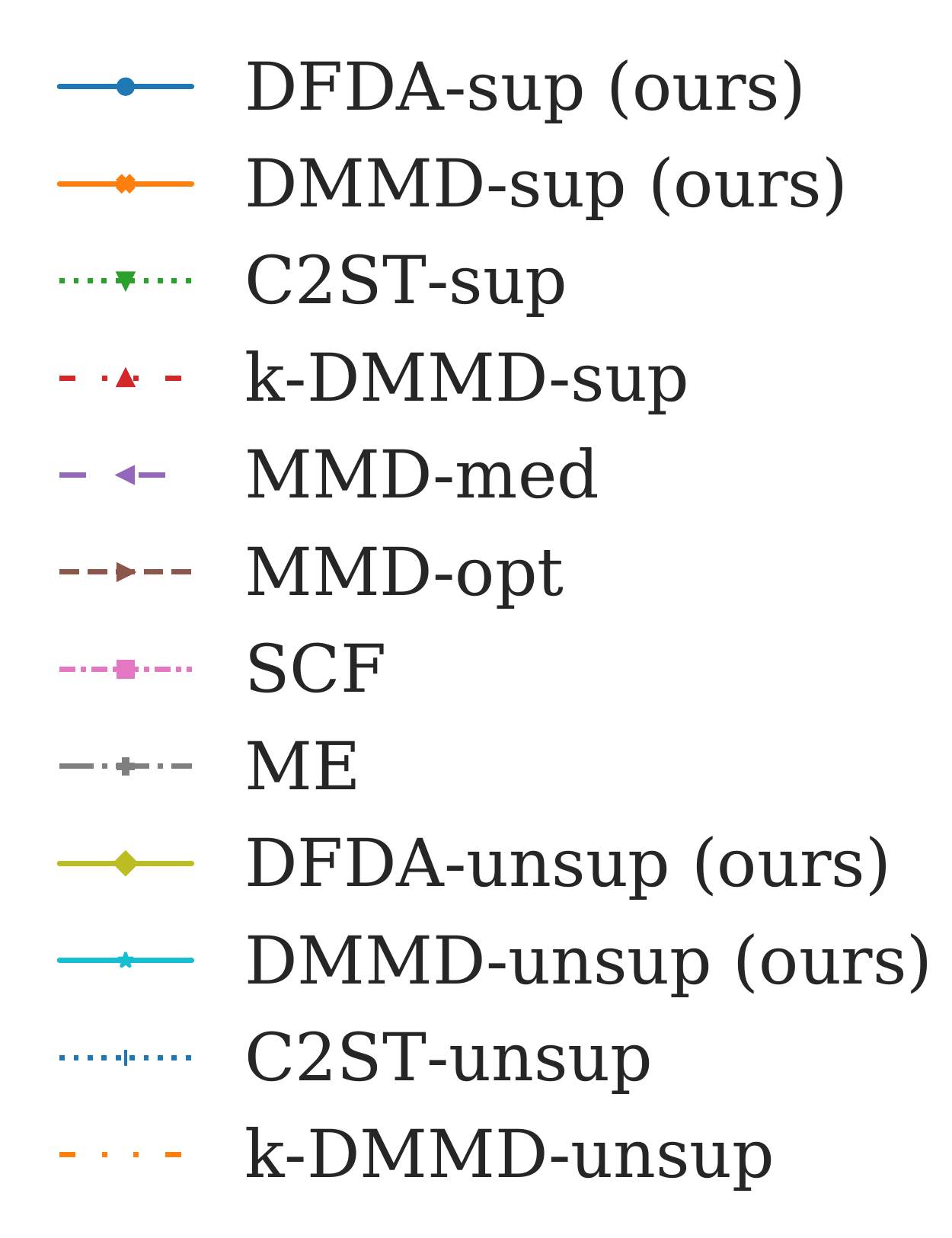}
        \vspace{1mm}
    \end{subfigure}
    \\
    %%% end of first line
    \begin{subfigure}[b]{0.32\textwidth}
        \includegraphics[width=\textwidth]{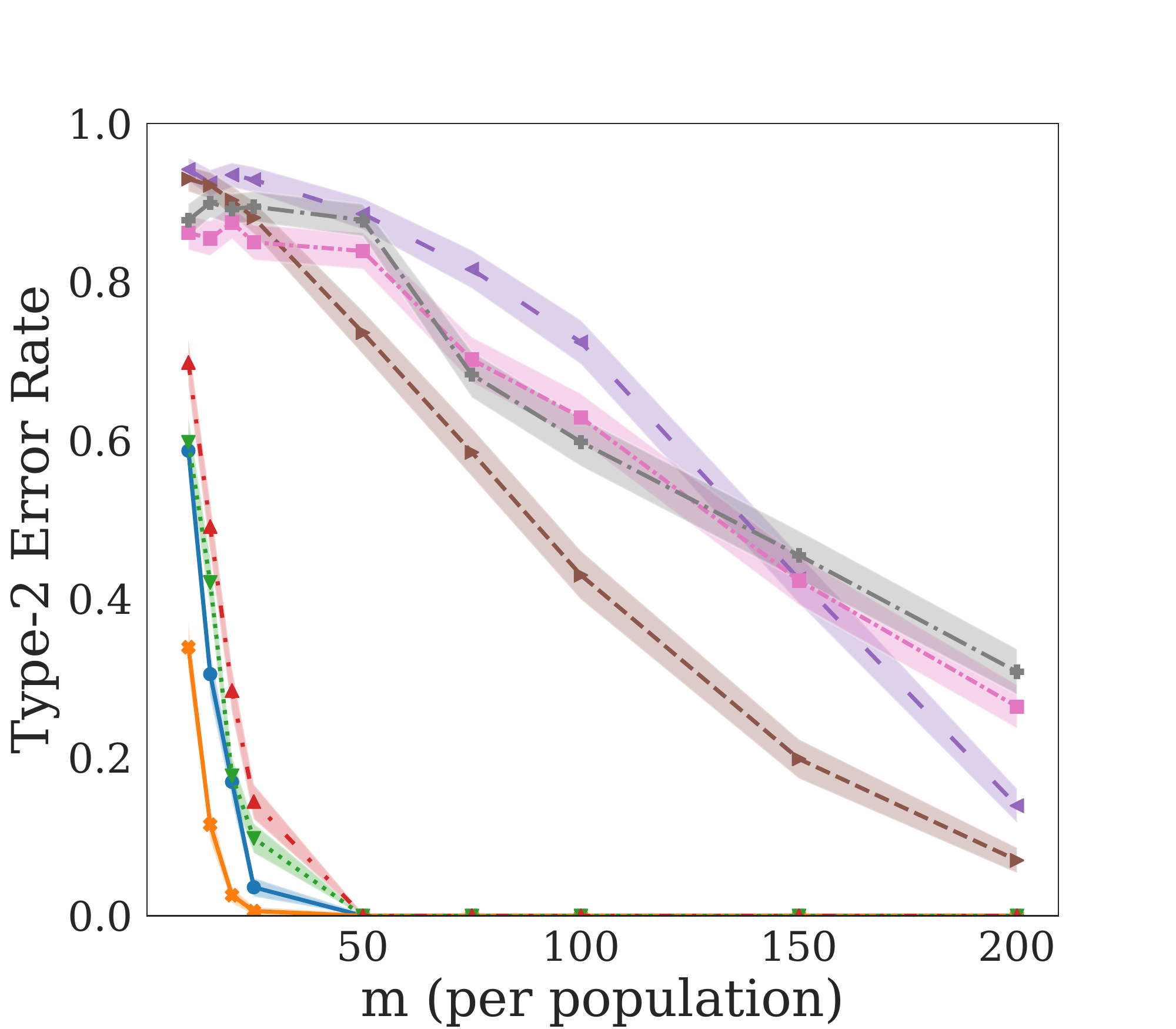}
        \caption{Type-2 error rate on aircraft data.}
        \label{fig:planes}
    \end{subfigure}
    \begin{subfigure}[b]{0.32\textwidth}
        \includegraphics[width=\textwidth]{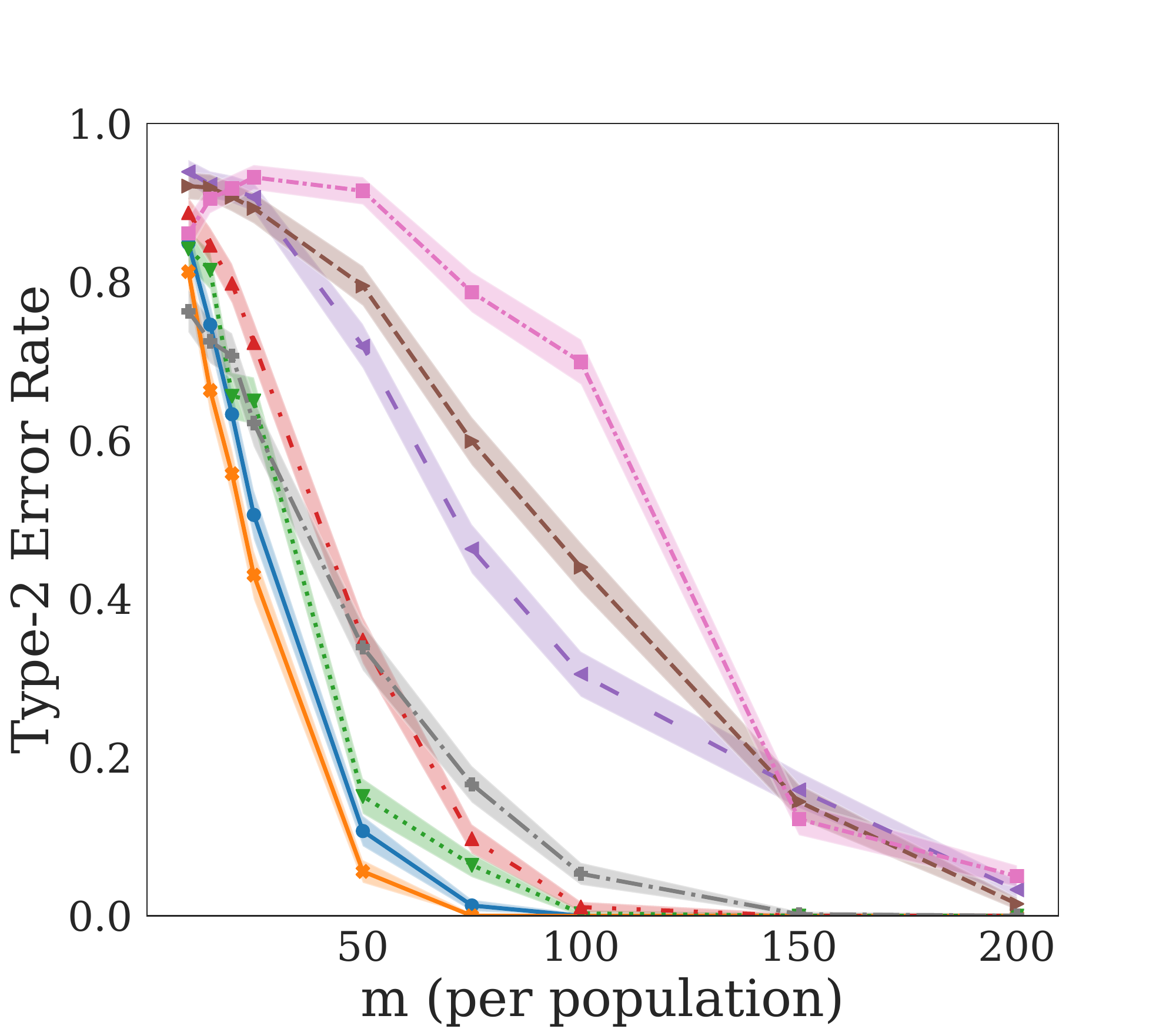}
        \caption{Type-2 error rate on KDEF data.}
        \label{fig:faces}
    \end{subfigure}
    \begin{subfigure}[b]{0.32\textwidth}
        \includegraphics[width=\textwidth]{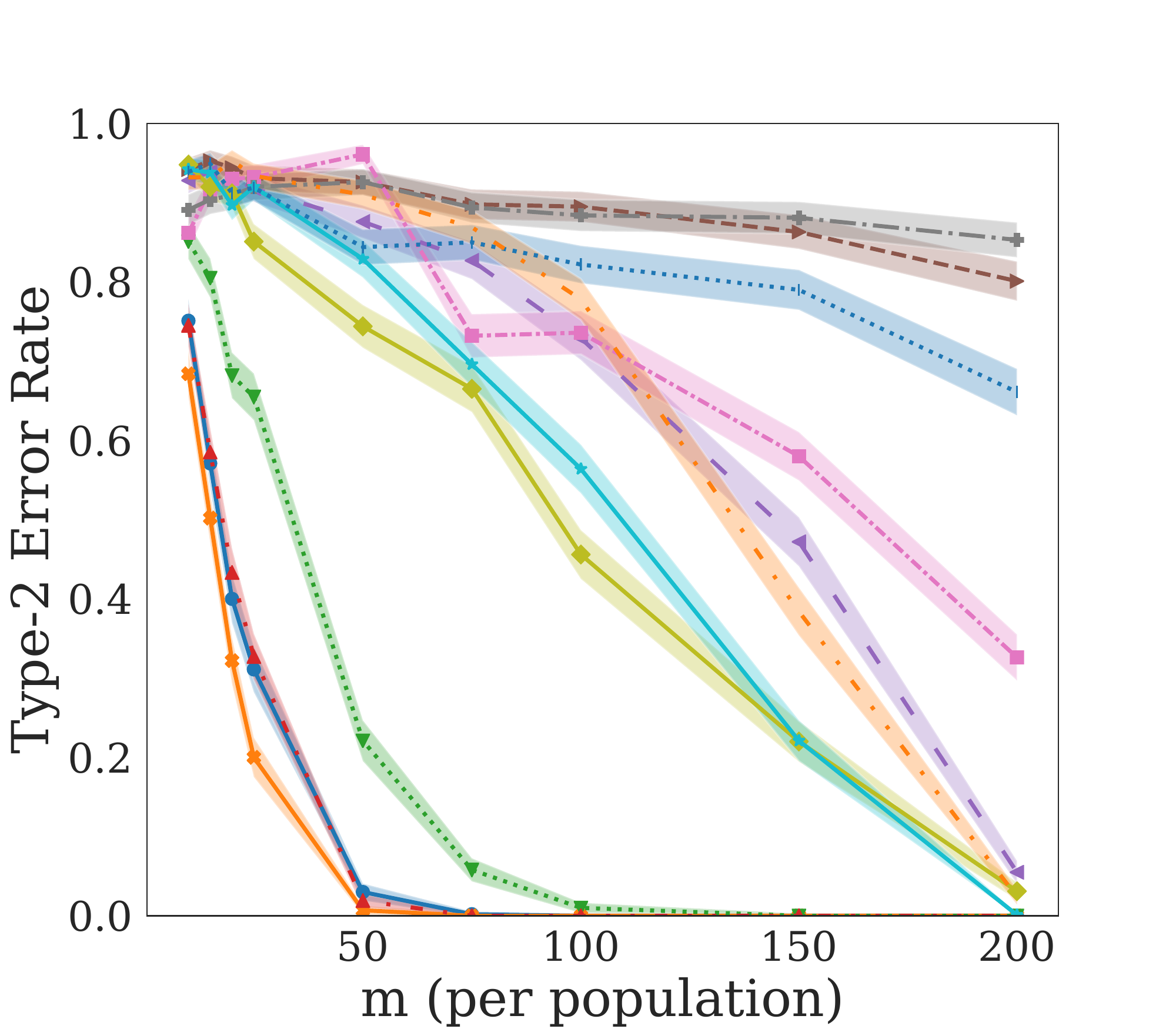}
        \caption{Type-2 error rate on dogs data.}
        \label{fig:dogs}
    \end{subfigure}
\caption{Results on AM audio (top row) and natural image (bottom row) data sets. Suffixes ``-sup`` indicate supervised pretraining, ``-unsup`` indicates unsupervised pretraining.}
\end{figure*}

In this section, we compare our proposed deep learning two-sample tests with other state-of-the-art approaches.

\subsection{Experimental setup}
\label{sec:expsetup}
For the \textbf{DFDA} and \textbf{DMMD} tests we train a deep neural network on a related task; details will be deferred to the corresponding sections.
We report both the performance of the deep MMD $S_{n,m}$ where we estimate the null hypothesis via a Monte Carlo permutation sample \citep{ernst2004permutation} (we fix $M=1000$ resampling permutations except otherwise noted), and the deep FDA statistic $T_{n,m}$, for which we use the asymptotic $\chi_H^2$ distribution.
As explained in Section~\ref{sec-computationpvals}, for the DFDA we project the last hidden layer onto $\hat{H} < H$ dimensions using a PCA.
We found the heuristic $\hat{H} := \sqrt{\frac{m+n}{2}}$ to perform well across a number of tasks (disjoint from the ones presented in this section).
For the DMMD we do not need any dimensionality reduction.
We calibrated parameters of both tests on data disjoint from the ones that we report results on in the subsequent sections.

For the \textbf{C2ST}, we train a standard logistic regression on top of the pretrained features extracted from the same neural network as for our methods.

For the \textbf{kernel MMD} we report two kernel bandwidth selection strategies for the Gaussian kernel.
The first variant is the ``median distance`` heuristic \citep{gretton2012kernel} which selects the median of the euclidean distances of all data points (MMD-med).
The second variant, reported by \citet{gretton2012optimal}, splits the data in two disjoint sets and selects the bandwidth that maximizes power on the first set and evaluates the MMD on the second set (MMD-opt).
We use the implementation provided by \citet{jitkrittum2016interpretable}, which estimates the null hypothesis via a Monte Carlo permutation scheme (we again use $M=1000$ permutations).

For the \textbf{Smoothed Characteristic Functions} (SCF) and \textbf{Mean Embeddings} (ME), we select the number of test locations based on the task and sample size.
The locations are selected either randomly (as presented by \citet{chwialkowski2015fast}) or optimized on half of the data via the procedure described by \citet{jitkrittum2016interpretable}.
The kernel was either selected using the median heuristic, or via a grid search as by \citet{chwialkowski2015fast, jitkrittum2016interpretable}.
In each case we report the kernel and location selection method that performed best on the given task, with details given in the corresponding paragraphs.
Note that for very small sample sizes, both SCF and ME oftentimes do not control the type-1 error rate properly, since they were designed for larger sample sizes.
This results in highly variable type-2 error rate for small $m$ in the experiments.
Again, we use the implementation provided by \citet{jitkrittum2016interpretable}.

In addition to these published methods, we also compare our method against a \textbf{deep kernel MMD test} (k-DMMD), i.e. the MMD test where the output of a pretrained neural network gets fed into a Gaussian kernel (instead of a linear kernel as in our case).
\citet{jitkrittum2018informative} used this method for relative goodness-of-fit testing instead of two-sample testing. 
For image data, we select the bandwidth parameter for the Gaussian kernel via the median heuristic, and for audio data via the power maximization technique (in each case the other variant performs considerably worse); the pretrained networks are the same as for our tests and the C2ST.

All experiments were run over 1000 runs.
Type-1 error rates are estimated by drawing both samples (without replacement) from the same class and computing the rate of rejections.
Similarly, type-2 error rates are estimated as the rate of not rejecting the null hypothesis when sampling from two distinct classes.
All figures of type-1 and type-2 error rates show the 95\% confidence interval based on a Wilson Score interval (and a ``rule-of-three`` approximation in the case of 0-values \citep{eypasch1995probability}).
In all settings we fixed the significance level at $\alpha = 0.05$.
In addition to that we show in Appendix~\ref{sec:sensitivity} empirically that also for smaller significance levels high power can be preserved.
Preprocessing for image data is explained in Appendix~\ref{sec:preprocessing}.

\subsection{Control of Type-1 Error Rate}
Since the presented test procedures are not exact tests it is important to verify that the type-1 error rate is controlled at the proper level.
Figure~\ref{fig:t1er} shows that the empirical type-1 error rate is well controlled for the amplitude modulated audio data introduced in the next section.
For the other data sets, results are provided in Appendix~\ref{sec:t1er}.

\subsection{Power Analysis}
\label{sec:power}
\paragraph{Amplitude Modulated Audio Data}

Here we analyze the proposed test on the amplitude modulated audio example from \citep{gretton2012optimal}.
The task in this setting is to distinguish snippets from two different songs after they have been amplitude modulated (AM) and mixed with noise.
We use the same preprocessing and amplitude modulation as \citet{gretton2012optimal}.
We use the freely available music from \citet{gramatiktaor}; distribution $p$ is sampled from track four, distribution $q$ from track five and the remaining tracks on the album were used for training the network in a multi-class classification setting.
As our neural network architecture we use a simple convolutional network, a variant from \citet{dai2017very}, called M5 therein; see Appendix~\ref{sec:audio-appendix} for details.

Figure~\ref{fig:audio} reports the results with varying number of observations under constant noise level $\sigma^2 = 1$. 
Our method shows high power, even at low sample sizes, whereas kernel methods need large amounts of data to deal with the task.
Note that these results are consistent with the original results in \citet{gretton2012optimal}, where the authors fixed the sample size at $m=10,000$ and consequently only used the (significantly less powerful) linear-time MMD test.

\paragraph{Aircraft} We investigate the Fine-Grained Visual Classification of Aircraft data set \citep{maji13fine-grained}.
We select two visually similar aircraft families, namely Boeing 737 and Boeing 747 as populations $p$ and $q$, respectively.
The neural network embeddings are extracted from a ResNet-152 \citep{he2016deep} trained on ILSVRC \citep{ILSVRC15}.
Figure~\ref{fig:planes} shows that all neural network architectures perform considerably better than the kernel methods.
Furthermore, our proposed tests can also outperform both the C2ST and the deep kernel MMD.

\paragraph{Facial Expressions} The Karolinska Directed Emotional Faces (KDEF) data set \citep{lundqvist1998karolinska} has been previously used by \citet{jitkrittum2016interpretable, lopez2016revisiting}.
The task is to distinguish between faces showing positive (happy, neutral, surprised) and negative (afraid, angry, disgusted) emotions.
The feature embeddings are again obtained from a ResNet-152 trained on ILSVRC.
Results can be found in Figure~\ref{fig:faces}.
Even though the images in ImageNet and KDEF are very different, the neural network tests again outperform the kernel methods.
Also note that the apparent advantage of the mean embedding test for low sample sizes is due to an unreasonably high type-1 error rate ($>0.11$ and $>0.085$ at $m=10, 15$, respectively).

\begin{table}[t]
    \caption{Results on neuroimaging data, comparing subjects who are cognitive normal (CN), have mild cognitive impairment (MCI) or have Alzheimer's disease (AD).
    \emph{APOE} has neutral variant $\varepsilon 3$ and risk-factor variant $\varepsilon 4$.
    %\emph{APOE} is a known genetic risk factor with possible states $\varepsilon 3$ and $\varepsilon 4$.
    Numbers in parentheses denote sample size.
    }
    %\caption{The DFDA distinguishes different disease conditions and APOE allele carriers with high certainty. $\varepsilon 4$ is a variant of APOE gene and a risk factor for developing Alzheimer's disease, while $\varepsilon 3$ is the common APOE variant. APOE, Apolipoprotein E.}
  \label{tab:mri_results}
  \centering
  \begin{tabular}{lll}
    \toprule
    X (\# obs) & Y (\# obs) & p-value \\
    \midrule
CN (490) & AD (314) & $9.49 \cdot 10^{-5}$\\
CN (490) & MCI (287) & $2.44 \cdot 10^{-4}$ \\ 
MCI (287) & AD (314) & $1.45 \cdot 10^{-3}$\\
    \midrule
\emph{APOE} $\varepsilon 3$  (811) & \emph{APOE} $\varepsilon 4$ (152) & $1.40 \cdot 10^{-2}$\\
    \bottomrule
  \end{tabular}
\end{table}

\paragraph{Stanford Dogs}
Lastly, we evaluate our tests on the Stanford Dogs data set \citep{KhoslaYaoJayadevaprakashFeiFei_FGVC2011}, consisting of 120 classes of different dog breeds.
As test classes we select the dog breeds `Irish wolfhound` and `Scottish deerhound`, two breeds that are visually extremely similar.
Since the data set is a subset of the ILSVRC data, we cannot train the networks on the whole ImageNet data again.
Instead, we train a small 6-layer convolutional neural network on the remaining 118 classes in a multi-class classification setting and use the embedding from the last hidden layer.
To show that our tests can also work with unsupervised transfer-learning, we also train a convolutional autoencoder on this data; the encoder part is identical to the supervised CNN, see Appendix~\ref{sec:dogs-appendix} for details.
Note that for this setting, the theoretical consistency guarantees from Theorem~\ref{thm-consistency} do not hold, although the type-1 error rate is still asymptotically controlled.
Figure~\ref{fig:dogs} reports the results, with *-sup denoting the supervised, and *-unsup the unsupervised transfer-learning task.
As expected, tests based on the supervised embedding approach outperform other tests by a large margin.
However, the unsupervised DMMD and DFDA still outperform kernel-based tests.
Interestingly, both the C2ST and the k-DMMD method seem to suffer more severely from the mediocre feature embedding than our tests.
One potential explanation for this phenomenon is the ability of DMMD and DFDA to fine-tune on the test data without the need to perform a data split.

\paragraph{Three-dimensional Neuroimaging Data}

\begin{figure}[t]
\centering
\includegraphics[width=0.95\linewidth]{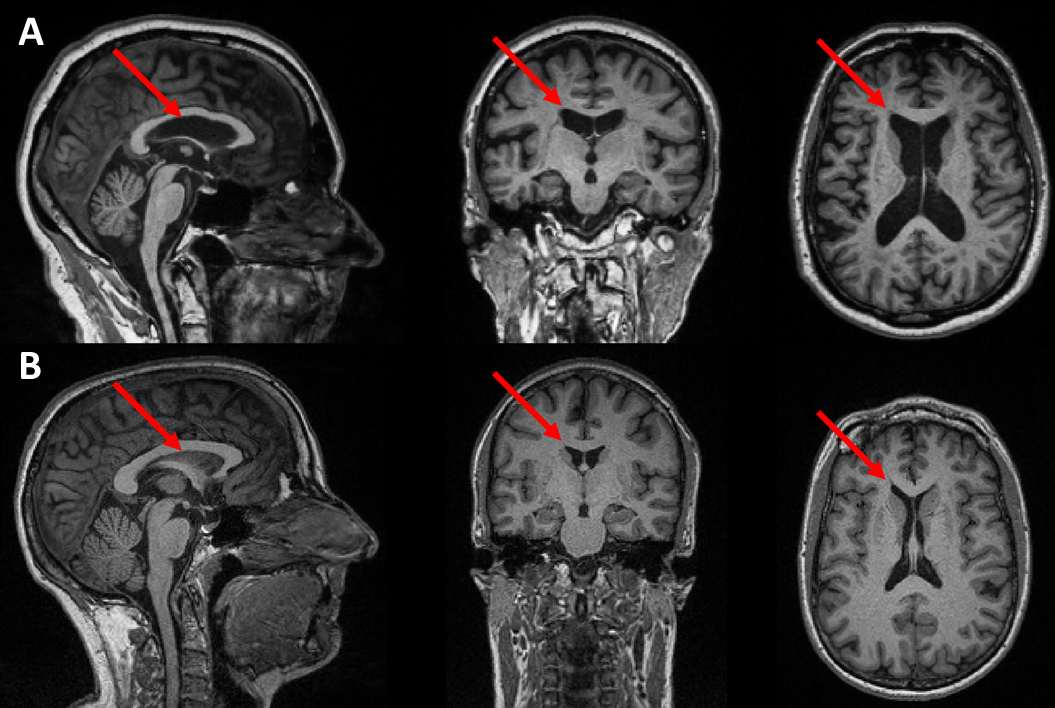}
    \caption{Slices of 3D-MRI scans of an Alzheimer's disease patient (A) and a cognitively normal individual (B). Note the enlargement of the lateral ventricles (indicated by red arrows) in the Alzheimer's disease patient.}
    \label{fig:mri}
\end{figure}
In this section, we apply the DFDA test procedure to 3D Magnetic Resonance Imaging (MRI) scans and genetic information from the Alzheimer’s Disease Neuroimaging Initiative (ADNI) \citep{mueller2005alzheimer}. 
To this end, we transfer a 3D convolutional autoencoder that has been trained on MRI scans from the Brain Genomics Superstruct Project \citep{holmes2015brain} to perform statistical testing on the ADNI data.
Details on preprocessing and network architecture are provided in Appendix~\ref{sec:mri-appendix}.

The ADNI dataset consists of individuals diagnosed with Alzheimer's Disease (AD), with Mild Cognitive Impairment (MCI), or as cognitively normal (CN); Figure~\ref{fig:mri} shows exemplaric images of an AD and a CN subject.
Table~\ref{tab:mri_results} shows that our test can detect statistically significant differences between MRI scans of individuals with a different diagnosis.
Additionally, we evaluate whether our test can detect differences between individuals who have a known genetic risk factor for neurodegenerative diseases and individuals without that risk factor.
In particular, we compare the two variants $\varepsilon 3$ (the ``normal'' variant) and $\varepsilon 4$ (the risk-factor variant) in the Apolipoprotein E (\emph{APOE}) gene, which is related to AD and other diseases \citep{corder1993apoe}.
By grouping subjects according to which variant they exhibit we test for statistical dependence between a (binary) genetic mutation and (continuous) variation in 3D MRI scans.
Table~\ref{tab:mri_results} shows that individuals with $\varepsilon 4$ and $\varepsilon 3$ \emph{APOE} variants are significantly different, suggesting a statistical dependence between genetic variation and structural brain features.

%carriers of genetic variations of Apolipoprotein E (APOE). $\varepsilon 3$ is the common APOE variant while $\varepsilon 4$ is a risk factor for developing AD .
%MCI is a less severe disease state that can later converge to AD.
%The MRI scans of individuals effected with AD or MCI show signs of neurodegeneration such as enlarged ventricles (see Figure~\ref{fig:mri}).  

\iffalse
\section{CONCLUSION}
We proposed a two-sample test using representations that are obtained using (unsupervised or supervised) transfer learning of deep neural networks on adjacent tasks.
If data from a source task is not available, our test can also adopt a data split approach, similar to classifier two-sample tests.
We considered two test statistics and derived their asymptotic distribution under both the null and alternative hypothesis. 
The theoretical analysis shows that the proposed test asymptotically controls the type-1 error rate and exhibits a weak form of consistency.
We evaluated the proposed tests empirically on audio, visual, and neuroimaging data.
The test achieved a reduction of up to 45 percentage points in type-2 error compared to the state-of-the-art.
We showed that the presented tests might contribute toward scientific discovery by making complex data such as 3D-MRI accessible to scrutinizing statistical analysis.
\fi
\subsection*{Acknowledgements}
The authors thank Stefan Konigorski and Jesper Lund for helpful discussions and comments.
Marius Kloft acknowledges support by the German Research Foundation (DFG) award KL 2698/2-1 and by the Federal Ministry of Science and Education (BMBF) awards 031L0023A, 01IS18051A, and 031B0770E.
Part of the work was done while Marius Kloft was a sabbatical visitor of the DASH Center at the University of Southern California.
This work has been funded by the Federal Ministry of Education and Research (BMBF, Germany) in the project KI-LAB-ITSE (project number 01|S19066).

Data used in the preparation of this article were obtained from the Alzheimer’s Disease Neuroimaging Initiative (ADNI) database \url{adni.loni.usc.edu}. As such, the investigators within the ADNI contributed to the design and implementation of ADNI and/or provided data but did not participate in analysis or writing of this report. A complete listing of ADNI investigators can be found at: \url{http://adni.loni.usc.edu/wp-content/uploads/how_to_apply/ADNI_ Acknowledgement_List.pdf}. Data collection and sharing of ADNI was funded by the Alzheimer’s Disease Neuroimaging Initiative (ADNI) (National Institutes of Health Grant U01 AG024904) and DOD ADNI (Department of Defense award number W81XWH-12-2-0012). ADNI is funded by the National Institute on Aging, the National Institute of Biomedical Imaging and Bioengineering, and through generous contributions from the following: Alzheimer’s Association; Alzheimer’s Drug Discovery Foundation; BioClinica Inc; Biogen Idec Inc; Bristol-Myers Squibb Company; Eisai Inc; Elan Pharmaceuticals Inc; Eli Lilly and Company; F. Hoffmann-La Roche Ltd and its affiliated company Genentech Inc; GE Healthcare; Innogenetics N.V.; IXICO Ltd; Janssen Alzheimer Immunotherapy Research \& Development LLC; Johnson \& Johnson Pharmaceutical Research \& Development LLC; Medpace Inc; Merck \& Co Inc; Meso Scale Diagnostics LLC; NeuroRx Research; Novartis Pharmaceuticals Corporation; Pfizer Inc; Piramal Imaging; Servier; Synarc Inc; and Takeda Pharmaceutical Company. The Canadian Institutes of Health Research is providing funds to support ADNI clinical sites in Canada. Private sector contributions are facilitated by the Foundation for the National Institutes of Health (\url{http://www.fnih.org}). The grantee organization is the Northern California Institute for Research and Education, and the study is coordinated by the Alzheimer’s Disease Cooperative Study at the University of California, San Diego. ADNI data are disseminated by the Laboratory for Neuro Imaging at the University of Southern California. Samples from the National Cell Repository for AD (NCRAD), which receives government support under a cooperative agreement grant (U24 AG21886) awarded by the National Institute on Aging (AIG), were used in this study. Funding for the WGS was provided by the Alzheimer’s Association and the Brin Wojcicki Foundation.
\bibliography{references}

\begin{thebibliography}{44}
\providecommand{\natexlab}[1]{#1}
\providecommand{\url}[1]{\texttt{#1}}
\expandafter\ifx\csname urlstyle\endcsname\relax
  \providecommand{\doi}[1]{doi: #1}\else
  \providecommand{\doi}{doi: \begingroup \urlstyle{rm}\Url}\fi

\bibitem[Arbel et~al.(2018)Arbel, Sutherland, Bi{\'n}kowski, and
  Gretton]{arbel2018gradient}
Michael Arbel, Dougal Sutherland, Miko{\l}aj Bi{\'n}kowski, and Arthur Gretton.
\newblock On gradient regularizers for mmd gans.
\newblock In \emph{Advances in Neural Information Processing Systems}, pages
  6700--6710, 2018.

\bibitem[Bausch(2013)]{bausch2013efficient}
Johannes Bausch.
\newblock On the efficient calculation of a linear combination of chi-square
  random variables with an application in counting string vacua.
\newblock \emph{Journal of Physics A: Mathematical and Theoretical},
  46\penalty0 (50):\penalty0 505202, 2013.

\bibitem[Bentkus(2005)]{bentkus2005lyapunov}
Vidmantas Bentkus.
\newblock A lyapunov-type bound in rd.
\newblock \emph{Theory of Probability \& Its Applications}, 49\penalty0
  (2):\penalty0 311--323, 2005.

\bibitem[Bi{\'n}kowski et~al.(2018)Bi{\'n}kowski, Sutherland, Arbel, and
  Gretton]{binkowski2018demystifying}
Miko{\l}aj Bi{\'n}kowski, Dougal~J Sutherland, Michael Arbel, and Arthur
  Gretton.
\newblock Demystifying mmd gans.
\newblock \emph{arXiv preprint arXiv:1801.01401}, 2018.

\bibitem[Chwialkowski et~al.(2015)Chwialkowski, Ramdas, Sejdinovic, and
  Gretton]{chwialkowski2015fast}
Kacper~P Chwialkowski, Aaditya Ramdas, Dino Sejdinovic, and Arthur Gretton.
\newblock Fast two-sample testing with analytic representations of probability
  measures.
\newblock In \emph{Advances in Neural Information Processing Systems}, pages
  1981--1989, 2015.

\bibitem[Corder et~al.(1993)Corder, Saunders, Strittmatter, Schmechel, Gaskell,
  Small, Roses, Haines, and Pericak-Vance]{corder1993apoe}
EH~Corder, AM~Saunders, WJ~Strittmatter, DE~Schmechel, PC~Gaskell, GW~Small,
  AD~Roses, JL~Haines, and MA~Pericak-Vance.
\newblock Gene dose of apolipoprotein e type 4 allele and the risk of
  alzheimer's disease in late onset families.
\newblock \emph{Science}, 261\penalty0 (5):\penalty0 921--923, 1993.

\bibitem[Dai et~al.(2017)Dai, Dai, Qu, Li, and Das]{dai2017very}
Wei Dai, Chia Dai, Shuhui Qu, Juncheng Li, and Samarjit Das.
\newblock Very deep convolutional neural networks for raw waveforms.
\newblock In \emph{2017 IEEE International Conference on Acoustics, Speech and
  Signal Processing (ICASSP)}, pages 421--425. IEEE, 2017.

\bibitem[Devroye et~al.(2013)Devroye, Gy{\"o}rfi, and
  Lugosi]{devroye2013probabilistic}
Luc Devroye, L{\'a}szl{\'o} Gy{\"o}rfi, and G{\'a}bor Lugosi.
\newblock \emph{A probabilistic theory of pattern recognition}, volume~31.
\newblock Springer Science \& Business Media, 2013.

\bibitem[Durrett(2019)]{durrett2019probability}
Rick Durrett.
\newblock \emph{Probability: theory and examples}, volume~49.
\newblock Cambridge university press, 2019.

\bibitem[Ernst et~al.(2004)]{ernst2004permutation}
Michael~D Ernst et~al.
\newblock Permutation methods: a basis for exact inference.
\newblock \emph{Statistical Science}, 19\penalty0 (4):\penalty0 676--685, 2004.

\bibitem[Eypasch et~al.(1995)Eypasch, Lefering, Kum, and
  Troidl]{eypasch1995probability}
Ernst Eypasch, Rolf Lefering, CK~Kum, and Hans Troidl.
\newblock Probability of adverse events that have not yet occurred: a
  statistical reminder.
\newblock \emph{Bmj}, 311\penalty0 (7005):\penalty0 619--620, 1995.

\bibitem[Friedman(2003)]{friedman2003multivariate}
Jerome Friedman.
\newblock On multivariate goodness-of-fit and two-sample testing.
\newblock In \emph{Statistical Problems in Particle Physics, Astrophysics, and
  Cosmology}, page 311, 2003.

\bibitem[Golowich et~al.(2017)Golowich, Rakhlin, and Shamir]{golowich2017size}
Noah Golowich, Alexander Rakhlin, and Ohad Shamir.
\newblock Size-independent sample complexity of neural networks.
\newblock \emph{arXiv preprint arXiv:1712.06541}, 2017.

\bibitem[Gramatik(2014)]{gramatiktaor}
Gramatik.
\newblock The age of reason.
\newblock \url{http://dl.lowtempmusic.com/Gramatik-TAOR.zip}, 2014.
\newblock [Online; accessed May/23/2019].

\bibitem[Gretton et~al.(2012{\natexlab{a}})Gretton, Borgwardt, Rasch,
  Sch{\"o}lkopf, and Smola]{gretton2012kernel}
Arthur Gretton, Karsten~M Borgwardt, Malte~J Rasch, Bernhard Sch{\"o}lkopf, and
  Alexander Smola.
\newblock A kernel two-sample test.
\newblock \emph{Journal of Machine Learning Research}, 13\penalty0
  (Mar):\penalty0 723--773, 2012{\natexlab{a}}.

\bibitem[Gretton et~al.(2012{\natexlab{b}})Gretton, Sejdinovic, Strathmann,
  Balakrishnan, Pontil, Fukumizu, and Sriperumbudur]{gretton2012optimal}
Arthur Gretton, Dino Sejdinovic, Heiko Strathmann, Sivaraman Balakrishnan,
  Massimiliano Pontil, Kenji Fukumizu, and Bharath~K Sriperumbudur.
\newblock Optimal kernel choice for large-scale two-sample tests.
\newblock In \emph{Advances in neural information processing systems}, pages
  1205--1213, 2012{\natexlab{b}}.

\bibitem[Hanin(2017)]{hanin2017universal}
Boris Hanin.
\newblock Universal function approximation by deep neural nets with bounded
  width and relu activations.
\newblock \emph{arXiv preprint arXiv:1708.02691}, 2017.

\bibitem[Harchaoui et~al.(2008)Harchaoui, Bach, and
  Moulines]{harchaoui2008testing}
Za{\"\i}d Harchaoui, Francis~R Bach, and Èric Moulines.
\newblock Testing for homogeneity with kernel fisher discriminant analysis.
\newblock In \emph{Advances in Neural Information Processing Systems}, pages
  609--616, 2008.

\bibitem[He et~al.(2016)He, Zhang, Ren, and Sun]{he2016deep}
Kaiming He, Xiangyu Zhang, Shaoqing Ren, and Jian Sun.
\newblock Deep residual learning for image recognition.
\newblock In \emph{Proceedings of the IEEE conference on computer vision and
  pattern recognition}, pages 770--778, 2016.

\bibitem[Holmes et~al.(2015)Holmes, Hollinshead, O’Keefe, Petrov, Fariello,
  Wald, Fischl, Rosen, Mair, Roffman, et~al.]{holmes2015brain}
Avram~J Holmes, Marisa~O Hollinshead, Timothy~M O’Keefe, Victor~I Petrov,
  Gabriele~R Fariello, Lawrence~L Wald, Bruce Fischl, Bruce~R Rosen, Ross~W
  Mair, Joshua~L Roffman, et~al.
\newblock Brain genomics superstruct project initial data release with
  structural, functional, and behavioral measures.
\newblock \emph{Scientific data}, 2:\penalty0 150031, 2015.

\bibitem[Ioffe and Szegedy(2015)]{ioffe2015batch}
Sergey Ioffe and Christian Szegedy.
\newblock Batch normalization: Accelerating deep network training by reducing
  internal covariate shift.
\newblock \emph{arXiv preprint arXiv:1502.03167}, 2015.

\bibitem[Jitkrittum et~al.(2016)Jitkrittum, Szab{\'o}, Chwialkowski, and
  Gretton]{jitkrittum2016interpretable}
Wittawat Jitkrittum, Zolt{\'a}n Szab{\'o}, Kacper~P Chwialkowski, and Arthur
  Gretton.
\newblock Interpretable distribution features with maximum testing power.
\newblock In \emph{Advances in Neural Information Processing Systems}, pages
  181--189, 2016.

\bibitem[Jitkrittum et~al.(2018)Jitkrittum, Kanagawa, Sangkloy, Hays,
  Sch{\"o}lkopf, and Gretton]{jitkrittum2018informative}
Wittawat Jitkrittum, Heishiro Kanagawa, Patsorn Sangkloy, James Hays, Bernhard
  Sch{\"o}lkopf, and Arthur Gretton.
\newblock Informative features for model comparison.
\newblock In \emph{Advances in Neural Information Processing Systems}, pages
  808--819, 2018.

\bibitem[Khosla et~al.(2011)Khosla, Jayadevaprakash, Yao, and
  Fei-Fei]{KhoslaYaoJayadevaprakashFeiFei_FGVC2011}
Aditya Khosla, Nityananda Jayadevaprakash, Bangpeng Yao, and Li~Fei-Fei.
\newblock Novel dataset for fine-grained image categorization.
\newblock In \emph{First Workshop on Fine-Grained Visual Categorization, IEEE
  Conference on Computer Vision and Pattern Recognition}, Colorado Springs, CO,
  June 2011.

\bibitem[Kim et~al.(2016)Kim, Ramdas, Singh, and
  Wasserman]{kim2016classification}
Ilmun Kim, Aaditya Ramdas, Aarti Singh, and Larry Wasserman.
\newblock Classification accuracy as a proxy for two sample testing.
\newblock \emph{arXiv preprint arXiv:1602.02210}, 2016.

\bibitem[Kingma and Ba(2014)]{kingma2014adam}
Diederik~P Kingma and Jimmy Ba.
\newblock Adam: A method for stochastic optimization.
\newblock \emph{arXiv preprint arXiv:1412.6980}, 2014.

\bibitem[Lehmann and Romano(2006)]{lehmann2006testing}
Erich~L Lehmann and Joseph~P Romano.
\newblock \emph{Testing statistical hypotheses}.
\newblock Springer Science \& Business Media, 2006.

\bibitem[Li et~al.(2017)Li, Chang, Cheng, Yang, and P{\'o}czos]{li2017mmd}
Chun-Liang Li, Wei-Cheng Chang, Yu~Cheng, Yiming Yang, and Barnab{\'a}s
  P{\'o}czos.
\newblock Mmd gan: Towards deeper understanding of moment matching network.
\newblock In \emph{Advances in Neural Information Processing Systems}, pages
  2203--2213, 2017.

\bibitem[Lopez-Paz and Oquab(2016)]{lopez2016revisiting}
David Lopez-Paz and Maxime Oquab.
\newblock Revisiting classifier two-sample tests.
\newblock \emph{arXiv preprint arXiv:1610.06545}, 2016.

\bibitem[Lu et~al.(2015)Lu, Behbood, Hao, Zuo, Xue, and Zhang]{lu2015transfer}
Jie Lu, Vahid Behbood, Peng Hao, Hua Zuo, Shan Xue, and Guangquan Zhang.
\newblock Transfer learning using computational intelligence: a survey.
\newblock \emph{Knowledge-Based Systems}, 80:\penalty0 14--23, 2015.

\bibitem[Lundqvist et~al.(1998)Lundqvist, Flykt, and
  {\"O}hman]{lundqvist1998karolinska}
Daniel Lundqvist, Anders Flykt, and Arne {\"O}hman.
\newblock The karolinska directed emotional faces (kdef).
\newblock \emph{CD ROM from Department of Clinical Neuroscience, Psychology
  section, Karolinska Institutet}, 91:\penalty0 630, 1998.

\bibitem[Maji et~al.(2013)Maji, Kannala, Rahtu, Blaschko, and
  Vedaldi]{maji13fine-grained}
S.~Maji, J.~Kannala, E.~Rahtu, M.~Blaschko, and A.~Vedaldi.
\newblock Fine-grained visual classification of aircraft.
\newblock Technical report, 2013.

\bibitem[Mieth et~al.(2016)Mieth, Kloft, Rodr{\'\i}guez, Sonnenburg, Vobruba,
  Morcillo-Su{\'a}rez, Farr{\'e}, Marigorta, Fehr, Dickhaus,
  et~al.]{mieth2016combining}
Bettina Mieth, Marius Kloft, Juan~Antonio Rodr{\'\i}guez, S{\"o}ren Sonnenburg,
  Robin Vobruba, Carlos Morcillo-Su{\'a}rez, Xavier Farr{\'e}, Urko~M
  Marigorta, Ernst Fehr, Thorsten Dickhaus, et~al.
\newblock Combining multiple hypothesis testing with machine learning increases
  the statistical power of genome-wide association studies.
\newblock \emph{Scientific reports}, 6:\penalty0 36671, 2016.

\bibitem[Mohri et~al.(2018)Mohri, Rostamizadeh, and
  Talwalkar]{mohri2018foundations}
Mehryar Mohri, Afshin Rostamizadeh, and Ameet Talwalkar.
\newblock \emph{Foundations of machine learning}.
\newblock MIT press, 2018.

\bibitem[Mueller et~al.(2005)Mueller, Weiner, Thal, Petersen, Jack, Jagust,
  Trojanowski, Toga, and Beckett]{mueller2005alzheimer}
Susanne~G Mueller, Michael~W Weiner, Leon~J Thal, Ronald~C Petersen, Clifford
  Jack, William Jagust, John~Q Trojanowski, Arthur~W Toga, and Laurel Beckett.
\newblock The alzheimer's disease neuroimaging initiative.
\newblock \emph{Neuroimaging Clinics}, 15\penalty0 (4):\penalty0 869--877,
  2005.

\bibitem[Neyman and Pearson(1933)]{neyman1933problem}
J~Neyman and ES~Pearson.
\newblock On the problem of the most efficient tests of statistical hypotheses.
\newblock \emph{Philosophical Transactions of the Royal Society of London.
  Series A, Containing Papers of a Mathematical or Physical Character},
  231:\penalty0 289--337, 1933.

\bibitem[Paszke et~al.(2017)Paszke, Gross, Chintala, Chanan, Yang, DeVito, Lin,
  Desmaison, Antiga, and Lerer]{paszke2017automatic}
Adam Paszke, Sam Gross, Soumith Chintala, Gregory Chanan, Edward Yang, Zachary
  DeVito, Zeming Lin, Alban Desmaison, Luca Antiga, and Adam Lerer.
\newblock Automatic differentiation in pytorch.
\newblock In \emph{NIPS-W}, 2017.

\bibitem[Prechelt(1998)]{prechelt1998early}
Lutz Prechelt.
\newblock Early stopping-but when?
\newblock In \emph{Neural Networks: Tricks of the trade}, pages 55--69.
  Springer, 1998.

\bibitem[Russakovsky et~al.(2015)Russakovsky, Deng, Su, Krause, Satheesh, Ma,
  Huang, Karpathy, Khosla, Bernstein, Berg, and Fei-Fei]{ILSVRC15}
Olga Russakovsky, Jia Deng, Hao Su, Jonathan Krause, Sanjeev Satheesh, Sean Ma,
  Zhiheng Huang, Andrej Karpathy, Aditya Khosla, Michael Bernstein,
  Alexander~C. Berg, and Li~Fei-Fei.
\newblock {ImageNet Large Scale Visual Recognition Challenge}.
\newblock \emph{International Journal of Computer Vision (IJCV)}, 115\penalty0
  (3):\penalty0 211--252, 2015.
\newblock \doi{10.1007/s11263-015-0816-y}.

\bibitem[Steinwart and Christmann(2008)]{steinwart2008support}
Ingo Steinwart and Andreas Christmann.
\newblock \emph{Support vector machines}.
\newblock Springer Science \& Business Media, 2008.

\bibitem[Wah et~al.(2011)Wah, Branson, Welinder, Perona, and
  Belongie]{WahCUB_200_2011}
C.~Wah, S.~Branson, P.~Welinder, P.~Perona, and S.~Belongie.
\newblock {The Caltech-UCSD Birds-200-2011 Dataset}.
\newblock Technical Report CNS-TR-2011-001, California Institute of Technology,
  2011.

\bibitem[Xu et~al.(2018)Xu, Huang, Yuan, Guo, Sun, Wu, and
  Weinberger]{xu2018empirical}
Qiantong Xu, Gao Huang, Yang Yuan, Chuan Guo, Yu~Sun, Felix Wu, and Kilian
  Weinberger.
\newblock An empirical study on evaluation metrics of generative adversarial
  networks.
\newblock \emph{arXiv preprint arXiv:1806.07755}, 2018.

\bibitem[Zhou et~al.(2016)Zhou, Ithapu, Ravi, Singh, Wahba, and
  Johnson]{zhou2016hypothesis}
Hao Zhou, Vamsi~K Ithapu, Sathya~Narayanan Ravi, Vikas Singh, Grace Wahba, and
  Sterling~C Johnson.
\newblock Hypothesis testing in unsupervised domain adaptation with
  applications in alzheimer's disease.
\newblock In \emph{Advances in neural information processing systems}, pages
  2496--2504, 2016.

\bibitem[Zhou and Guan(2018)]{zhou2018null}
Quan Zhou and Yongtao Guan.
\newblock On the null distribution of bayes factors in linear regression.
\newblock \emph{Journal of the American Statistical Association}, 113\penalty0
  (523):\penalty0 1362--1371, 2018.

\end{thebibliography}
\bibliographystyle{plainnat}

\clearpage

\appendix
\iffalse
\thispagestyle{empty}

\twocolumn[

\aistatstitle{Appendix: Two-sample Testing Using Deep Learning}

%\aistatsauthor{Matthias Kirchler$^{1,2}$ \quad Shahryar Khorasani$^1$ \quad Marius Kloft$^{2,3}$ \quad Christoph Lippert$^{1,4}$}

]
\fi

\section{PROOF OF THEOREMS}
\subsection{Control of type-1 error rate}
\label{app-proofs-pvals}
\begin{proof}[Proof of Theorem~\ref{thm-pvals}]

    \emph{(i)}
    Under $p = q$, it holds that $\E[\phi(X_1)] = \E[\phi(Y_1)]$ and $\Sigma = \Cov(\phi(X_1)) = \Cov(\phi(Y_1))$.
    Then we have
    \begin{align*}
        & \sqrt{\frac{nm}{n+m}} D_{n,m}(\phi)
        \\
        & \, = \sqrt{\frac{nm}{n+m}} \left( \frac{1}{n}\sum_{i=1}^n \phi(X_i) - \E[\phi(X_i)] \right.
        \\
        & \,\, \left. - \frac{1}{m} \sum_{i=1}^m \phi(Y_i) - \E[\phi(Y_i)] \right)
        \\
        & = \sqrt{\frac{m}{n+m}} \frac{1}{\sqrt{n}} \sum_{i=1}^n \left( \phi(X_i) - \E[\phi(X_i)] \right)
        \\
        & - \sqrt{\frac{n}{n+m}} \frac{1}{\sqrt{m}} \sum_{i=1}^m \left(\phi(Y_i) - \E[\phi(Y_i)]\right)
    \end{align*}
    Then the first term in the last expression converges in distribution against $\NN(0, r \Sigma)$ and the second term converges in distribution against $\NN(0, (1-r)\Sigma)$ by a multivariate Central Limit Theorem (note that $\phi(X_1)$ lies within $[-1,1]^H$ and hence all moments are finite).
    Since all $X_i$ and $Y_j$ are jointly independent, the limiting distributions are also independent, hence the whole term converges against $\NN(0, r\Sigma) - \NN(0, (1-r)\Sigma) = \NN(0, \Sigma)$.
    
    \emph{(ii)}
    By \emph{(i)} and the continuous mapping theorem, $S_{n,m}(\phi, \XX_n, \YY_m) \dto ||\zeta||^2$, where $\zeta \sim \NN(0, \Sigma)$.
    Since $\Sigma$ is positive semi-definite, there exist an orthogonal matrix $Q$ and a diagonal matrix $L = \mbox{diag}(\lambda_1, \ldots, \lambda_d)$ such that $\Sigma = QLQ^\top$.
    Then we have
    \begin{align*}
        ||Q\zeta||^2 = \zeta^\top Q^\top Q \zeta = \zeta^\top \zeta = ||\zeta||^2,
    \end{align*}
    and $Q\zeta \sim \NN(0, L)$, hence the claim.
    
    \emph{(iii)}
    By the weak law of large numbers, $\hat{\Sigma}_{n,m} \pto \Sigma$, and hence by \emph{(i)} and Slutsky's Theorem
    \begin{align*}
        \sqrt{\frac{nm}{n+m}}\hat{\Sigma}_{n,m}^{-\frac{1}{2}}D_{n,m}(\phi) \dto \NN(0, I).
    \end{align*}
    The rest follows again by the continuous mapping theorem.
    
\end{proof}

\subsection{Proof of Consistency}
\label{app-proofs-consistency}
Before we begin the proof we start with some auxiliary definitions and preliminary results.

As in Section~\ref{sec:testingprocedure} we can use the regression framework with $(Z_i, t_i)_i \subset \R^d \times \{-1, 1\}$.
Then $Z_i | t_i = 1 \sim p$, $Z_i | t_i = -1 \sim q$ and similarly for $(Z_i', t_i')$ and all jointly independent.
As we assume $\Pr(t=1) = \Pr(t=-1) = \frac{1}{2}$, the distribution of $(Z, t)$ is fully determined by specifying $p$ and $q$ and hence we write for the expected value e.g. $\E_{p,q}[f(Z,t)]$ for some function $f$.

We define the loss function
\begin{align*}
    L(t, \hat{t}) := 1 - t\hat{t} \in [0, 2]
\end{align*}
    with corresponding empirical and expected risks
\begin{align*}
    R_N'(\psi) & = \frac{1}{N} \sum_{i=1}^N 1 - t_i'\psi(Z_i'),
    \\
    R'(\psi) & = 1 - \E_{p', q'}[t\psi(Z)].
\end{align*}
%and the same for $R_n'$ and $R'$ with $(Z_i', t_i')$ and $p'$ and $q'$ instead.
The Bayes risk under the transfer task will be denoted as $R'^* = \inf_{f\in \MM} R'(f) = 1- \epsilon'$ where $\MM$ is class of all Borel-measurable functions from $\R^d \to [-1, 1]$

Selecting $\phi_N$ is equivalent to (inexact) empirical risk minimization over $\GG_N := \{ w^\top \phi | \phi \in \TF_N, ||w||\leq 1\}$, i.e.
\begin{align*}
    R_N'(\psi_N) \leq \min_{\psi\in\GG_N} R_N'( \psi) + \eta
\end{align*}
where $\psi_N = w^\top \phi_N \in \GG_N$ for some $||w_N||\leq 1$.

The following Lemma is based on Theorem 1 in \citep{golowich2017size} and we will need it to bound the complexity of the neural network function class $\GG_N$.
\begin{lemma}\label{lemma-consistency-rademacherbound}
    Let the data be a.s. be bounded by some $B > 0$ and
\begin{align*}
    \GG & := \left\{ w^\top \tanh \circ  W_{D'-1} \circ \sigma \circ \ldots \circ \sigma \circ W_1 : \R^d \to \R \right| 
    \\
    & W_1 \in \R^{H\times d}, W_j \in \R^{H\times H} \mbox{ for } j=2, \ldots, D'-1, 
    \\
    & w \in \R^{H} \mbox{ with } ||w||\leq 1,
    \\
    & \left. \prod_{j=1}^{D'-1}||W_j||_{Fro} \leq \beta, D'\leq D \right\}
\end{align*}
    
    Then, the empirical Rademacher complexity of $\GG$ can be bounded as:
    \begin{align*}
        \hat{\RR}_N(\GG) \leq \frac{B (d+1)(\sqrt{2\log(2)(D -1)} + 1)\beta}{\sqrt{N}}.
\end{align*}
\end{lemma}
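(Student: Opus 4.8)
The plan is to bound $\hat{\RR}_N(\GG)$ by peeling off the network layer by layer, then applying the contraction principle and Theorem~1 of \citet{golowich2017size} for the ReLU sub-network, and finally accounting for the extra $\tanh$ layer and the linear readout $w$. First I would write out $\hat{\RR}_N(\GG) = \frac{1}{N}\E_\sigma \sup_{\psi\in\GG}\sum_{i=1}^N \sigma_i \psi(Z_i)$, where $\sigma_i$ are i.i.d. Rademacher signs. Since $\psi = w^\top g$ with $\|w\|\le 1$ and $g = \tanh\circ W_{D'-1}\circ\sigma\circ\cdots\circ\sigma\circ W_1$ mapping into $\R^H$, I would use Cauchy--Schwarz to pull out $w$: $\sup_{\|w\|\le 1} w^\top \big(\sum_i \sigma_i g(Z_i)\big) = \big\|\sum_i\sigma_i g(Z_i)\big\|$. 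Then I bound the Euclidean norm of the vector-valued sum by $\sqrt{H}$ times a coordinatewise bound, or more cleanly by the standard inequality $\E_\sigma\|\sum_i\sigma_i g(Z_i)\| \le \sum_{k=1}^H \E_\sigma|\sum_i \sigma_i g_k(Z_i)|$ combined with the fact that each coordinate $g_k$ is a scalar network of the same form; this is where the factor $H = d+1$ enters. Actually, the cleaner route that matches the stated bound is to observe that after the Cauchy--Schwarz step the supremum over $w$ together with the vector output behaves, up to the dimension factor, like a scalar-output network, so that $\hat\RR_N(\GG) \le \frac{d+1}{N}\E_\sigma\sup\big|\sum_i\sigma_i h(Z_i)\big|$ where $h$ ranges over scalar ReLU-$\tanh$ networks with the same norm constraint.

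Next I would remove the final $\tanh$: since $\tanh$ is $1$-Lipschitz and $\tanh(0)=0$, the Ledoux--Talagrand contraction inequality gives that the Rademacher complexity of the $\tanh$-capped class is at most that of the uncapped class $\{W_{D'-1}\circ\sigma\circ\cdots\circ\sigma\circ W_1\}$ of depth-$(D'-1)$ ReLU networks (scalar output, i.e. $W_{D'-1}\in\R^{1\times H}$, with $\prod_{j=1}^{D'-1}\|W_j\|_{Fro}\le\beta$). For that class I would invoke Theorem~1 of \citet{golowich2017size}, which bounds the empirical Rademacher complexity of depth-$L$ norm-constrained networks by $\frac{B\,\beta(\sqrt{2\log 2\,L}+1)}{\sqrt{N}}$ (with $L = D'-1 \le D-1$ the number of weight layers, $B$ the data norm bound, and $\beta$ the product-of-Frobenius-norms bound). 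Substituting $L = D-1$ in the worst case and multiplying by the $(d+1)$ dimension factor yields exactly
\[
\hat{\RR}_N(\GG) \le \frac{B(d+1)(\sqrt{2\log(2)(D-1)}+1)\beta}{\sqrt{N}}.
\]

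The main obstacle I anticipate is handling the mismatch between the vector-valued hidden representation (output in $\R^H$) and the scalar-output setting of \citet{golowich2017size}: one must argue carefully that contracting over $w$ with $\|w\|\le 1$ does not cost more than the stated $(d+1)$ factor, rather than, say, $\sqrt{H}$ or $H$ — here the equality $H = d+1$ is used, so the two are the same up to a square root, and I would need to check whether the bound should really read $\sqrt{d+1}$ or $d+1$; I expect the proof pushes each of the $H$ output coordinates through Golowich et al. separately and sums, giving the linear $(d+1) = H$ factor as stated. A secondary but routine point is making sure the almost-sure data bound $B$ propagates correctly as the input-norm bound required by the cited theorem, and that the leniency parameter $\eta$ and the later use of this lemma (to control estimation error via the standard symmetrization/Rademacher bound $\E\sup|R_N' - R'| \le 2\hat\RR_N(\GG)$, plus the $\OO(1/\sqrt N)$ contribution of the bounded loss $L\in[0,2]$) are consistent — but those belong to the consistency proof, not to the lemma itself.
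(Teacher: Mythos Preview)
Your proposal is correct and follows essentially the same route as the paper: reduce the vector-valued output to a scalar class at the cost of the factor $H=d+1$, strip the $\tanh$ by contraction, and invoke Theorem~1 of \citet{golowich2017size}. The only cosmetic difference is that the paper phrases the first step via set inclusions and Minkowski-sum subadditivity (writing $\GG\subset\sum_{j=1}^H\{w\tanh\circ\phi:|w|\le1,\phi\in\GG_{D-1}^1\}$) rather than your Cauchy--Schwarz/$\ell^2\le\ell^1$ argument, and it explicitly handles the variable depth $D'\le D$ by padding shallower networks with identity layers so that Golowich et al.'s fixed-depth bound applies at $L=D-1$ --- a point you gloss over with ``worst case'' but which deserves one line of justification.
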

\begin{proof}[Proof of Lemma~\ref{lemma-consistency-rademacherbound}]
We define auxiliary function classes
\begin{align*}
    \GG_{D-1}^s & := \left\{ W_{D'-1} \circ \sigma \circ \ldots \ldots \sigma \circ W_1 :\R^d \to \R^s \right|
    \\
    & W_1 \in \R^{H\times d}, W_j \in \R^{H\times H} \mbox{ for } j=2, \ldots, D'-2,
    \\
    & W_{D'-1} \in \R^{s\times H},
    \\
    & \left. \prod_{j=1}^{D'-1}||W_j||_{Fro} \leq \beta, D' \leq D \right\}
\end{align*}
    for $s \in \{1, H\}$.

    Then we can rewrite $\GG$ as
    \begin{align*}
        \GG & = \{ \sum_{j=1}^H w_j \tanh\circ\phi_j | ||w|| \leq 1, \phi \in \GG_{D-1}^H \}
        \\
        & \subset \{ \sum_{j=1}^H w_j \tanh\circ\phi_j | ||w|| \leq 1, \phi_j \in \GG_{D-1}^1 \}
        \\
        & \subset \sum_{j=1}^H \{ w \tanh\circ\phi | |w| \leq 1, \phi \in \GG_{D-1}^1 \}.
    \end{align*}
    Therefore we can bound the Rademacher complexity as
    \begin{align*}
        \hat{\RR}_N(\GG) & \leq H \hat{\RR}_n(\{w \tanh\circ\phi | |w| \leq 1, \phi \in \GG_{D-1}^1 \})
        \\
        & \leq H \hat{\RR}_n(\{\tanh \circ\phi | \phi \in \GG_{D-1}^1 \})    
        \\
        & \leq H \hat{\RR}_n(\GG_{D-1}^1)
    \end{align*}
    by standard learning theory arguments.
    For $\GG_{D-1}^1$, we use the Rademacher bound found in \citep{golowich2017size} Theorem 1 (we cannot use the Theorem directly on $\GG$ since $\tanh$ is not positive homogeneous):
    \begin{align*}
        \hat{\RR}_N(\GG_{D-1}^1) \leq \frac{B (\sqrt{2\log(2)(D-1)} + 1) \beta}{\sqrt{N}}.
    \end{align*}
    The original Theorem 1 in \citep{golowich2017size} holds for depth $D-1$ networks, but we allowed networks of lower depth.
    However, one can fill up the networks to depth $D-1$ with identity weight matrices and identity activation functions; inspection of the proof of the Theorem then shows that the claim still holds.
    
    Since $H = d+1$, the claim follows.
\end{proof}

With these preliminary notions set up, we can proceed with the actual proof of consistency.
\begin{proof}[Proof of Theorem~\ref{thm-consistency}]
    We intend to show that $R_{n,m}(\psi_N)$ is asymptotically strictly smaller than 1; the divergence of the test statistic then follows easily.
    We will proceed in 5 steps.
    First, we split $R(\psi_N) - R'^*$ into transfer error, estimation error (of the transfer task) and approximation error (of the transfer task).
    Second, we show that the approximation error converges to zero (due to a Universal Approximation Theorem for deep networks); third we show that the estimation error is asymptotically bounded by $\eta$, using a learning theory bound on the Rademacher complexity of the neural network function class.
    This together implies that $R(\psi_N)$ and $R'^*$ are $(\delta + \eta)$-close asymptotically.
    Fourth, we show that the $R_{n,m}(\psi_N) - R(\psi_N) \pto 0$ and from this we finally deduce that the test statistics diverge to $+\infty$.

    \paragraph{1. Splitting the terms}
    We have
    \begin{align*}
        R(\psi_N) - R'^* = \left[R(\psi_N) - R'(\psi_N)\right] + \left[R'(\psi_N) - R'^* \right].
    \end{align*}
    The first term is bounded as follows:
    \begin{align*}
        |R(\psi_N) & - R'(\psi_N) |
        \\
        & = \left| \E_{p,q}[t\psi_N(Z)] - \E_{p', q'}[t\psi_N(Z)] \right|
        \\
        & \leq \frac{||\psi_N||_{\infty}}{2} (||p - p'||_{L_1(\mu)} + ||q - q'||_{L_1(\mu)})
        \\
        & \leq \delta,
    \end{align*}
    due to boundedness of $\psi_N$ and requirement \emph{(ii)}.
    
    The second term can again be split:
    \begin{align*}
        R'&(\psi_N)  - R'^*
        \\
        & = \left[ R'(\psi_N) - \min_{\psi\in\GG_N}R'(\psi)\right] + \left[\min_{\psi\in\GG_N}R'(\psi) - R'^* \right].
    \end{align*}
    
    \paragraph{2. Convergence of $\min_{\GG_N} R'(\psi) - R'^*$:}
    Let $\hat{\mu}$ be the Borel measure of $Z$ (not conditioned on $t$), i.e. $\Pr(Z\in A) = \hat{\mu}(A)$ for any $A\subset\R^d$ Borel.
    Following a similar argument as Lemma 30.2 in \citep{devroye2013probabilistic} then yields the following.
    For any fixed $\epsilon > 0$, select a measurable function $h:\R^d \to [-1, 1]$ such that $|R(h) - R^*| \leq \frac{\epsilon}{4}$, and a compact set $K \subset \R^d$ with $\hat{\mu}(K) \geq 1 - \frac{\epsilon}{8}$.
    Then, since compact-support continuous functions are dense in $L_1(\mu)$, there exists a continuous function $f:\R^d\to[-1,1]$ with
    \begin{align*}
        \E[|f(Z) - h(Z)| \mathbb{1}_{Z\in K}] \leq \frac{\epsilon}{4}.
    \end{align*}
    From the universal approximation theorem for deep ReLU networks in \citep{hanin2017universal}, there exists $N_0 \geq 1$ such that for all $N\geq N_0$ we can find a $\psi \in \GG_N$ with
    \begin{align*}
        \sup_{z\in K} |f(z) - \psi(z)|\leq \frac{\epsilon}{4}.
    \end{align*}
    Note that the Theorem in \citep{hanin2017universal} holds for ReLU-networks, but since $\tanh$ is invertible one can apply the universal approximation theorem on the first node in the last hidden layer, select the $w_N = [1, 0, \ldots, 0]^\top$  and still get the universal approximation property.
    
    Combining these yields, for $N$ large enough,
    \begin{align*}
        \min_{\psi\in\GG_N}& R'(\psi) - R'^* \leq R'(\psi) - R'^*
        \\
        = & \E_{p', q'}[-t\psi(Z) + th(Z)] + R'(m) - R'^*
        \\
        \leq & \E[|\psi(Z) - h(Z)|\mathbb{1}_{Z\in K}] + 2 \hat{\mu}(K^c) + \frac{\epsilon}{4}
        \\
        \leq & \E[|\psi(Z) - f(Z)|\mathbb{1}_{Z\in K}] 
        \\
        & + \E[|f(Z) - h(Z)|\mathbb{1}_{Z\in K}] + \frac{\epsilon}{2}
        \\
        \leq & \epsilon.
    \end{align*}
    Then, since $\epsilon>0$ was arbitrary, and $\min_{\psi\in\GG_N}R'(\psi) \ge  R'^*$ we get $\min_{\GG_N} R'(\psi) \to R'^*$ as $N\to \infty$.
    
    \paragraph{3. Asymptotic closeness of $R'(\psi_N)$ and $\min_{\GG_N}R'(\psi)$:}
    We can first bound by standard arguments:
    \begin{align*}
        R'&(\psi_N) - \min_{\psi\in\GG_N}R'(\psi) \\
        & = \left[R'(\psi_N) - R_N'(\psi_N)\right] + \left[R_N'(\psi_N) - \min_{\psi\in\GG_N}R(\psi)\right]
        \\
        & \leq \max_{\psi\in\GG_N}\left[ \left|R'(\psi) - R_N'(\psi)\right| \right]
        \\
        & \,\,\, + \left[\min_{\psi\in\GG_N}R_N'(\psi) + \eta - \min_{\psi\in\GG_N}R(\psi)\right]
        \\
        & \leq 2 \max_{\psi\in\GG_N}\left|R'(\psi) - R_N'(\psi)\right| + \eta
        \\
        & = 2 \sup_{h\in\HH_N} \left| \E[h(Z', t')] - \frac{1}{N}\sum_{i=1}^N g(Z_i', t_i')\right| + \eta
    \end{align*}
    as $R_N'(\psi_N) \leq \min R_N'(\psi) + \eta$, where we define $\HH_N := \left\{ (z, t) \mapsto L(\psi(Z), t) | \psi \in \GG_N \right\}$ as the conjunction of neural networks with the loss function.
    The first term can be bound with high probability by two-sided Rademacher inequalities:
    \begin{align*}
        \Pr& \left( \sup_{h\in\HH_N} \left| \E[h(Z', t')]  - \frac{1}{N}\sum_{i=1}^N g(Z_i', t_i')\right| \right.
        \\
        & \quad\quad\quad\quad\quad\quad \left. \leq 2 \hat{\RR}_N(\GG_N) + 6 \sqrt{\frac{\log(4/\zeta)}{2N}} \right) 
        \\
        & \geq 1 - \zeta
    \end{align*}
    for any $\zeta > 0$.
    This complexity bound follows from Theorem 11.3 in \citep{mohri2018foundations} if we insert the function class $\hat{\HH} := \HH_N \cup 2 - \HH_N$ by noting that the loss function is 1-Lipschitz in both its arguments non-negative and bounded from above by 2.
    
    Setting $\epsilon := 2\hat{\RR}_N(\GG_N) + 6\sqrt{\frac{\log(4/\zeta)}{2N}}$ then yields
    \begin{align}
    \label{eq-consistency-concentration}
         \Pr&\left( \sup_{h\in\HH_N} \left| \E[h(Z', t')]  - \frac{1}{N}\sum_{i=1}^N g(Z_i', t_i')\right| > \epsilon \right) \nonumber
         \\
         & \leq 4 \exp\left(-\frac{N(\epsilon - 2\hat{\RR}_N(\GG_N))^2}{18} \right).
    \end{align}
    But Lemma~\ref{lemma-consistency-rademacherbound} bounds the Rademacher complexity as
    \begin{align}
    \label{eq-rademacher-bound}
        \hat{\RR}_N(\GG_N) & \leq \frac{B (d+1)\left(\sqrt{2\log(2)(D_N -1)} + 1\right)\beta_N}{\sqrt{N}} \nonumber
        \\
        & \leq C \frac{\sqrt{D_N}\beta_N}{\sqrt{N}}
    \end{align}
    for some $C>0$ and $D_N$ large enough.
    Then
    \begin{align*}
        N (\epsilon - 2\hat{\RR}_N(\GG_N))^2 & \geq N\epsilon^2  - 4 N \hat{\RR}_N(\GG_N)
        \\
        & \geq N \epsilon^2 - 4 C \sqrt{N}\sqrt{D_N}\beta_N, 
    \end{align*}
    and the last term diverges to $\infty$ if $\frac{\beta_N^2D_N}{N}\to 0$.
    Hence, the right-hand side in equation~\eqref{eq-consistency-concentration} converges to 0.
    
    This shows that 
    \begin{align*}
        \Pr(R'(\psi_N) - \min_{\psi\in\GG_N}R'(\psi) \leq \epsilon + \eta) \to 1
    \end{align*}
    for any $\epsilon > 0$, i.e. $R'(\psi_N)$ is asymptotically $\eta$-close to $\min_{\GG_N}R'(\psi)$ (in probability). 
    
    \paragraph{4. \boldmath$R_{n,m}(\psi_N)-R(\psi_N) \pto 0$}
    Next we need to show that the empirical risk (over $(Z, t)$, not $(Z', t')$) also is asymptotically smaller than 1.
    
    We look at $\xi_{N,i} := t_i \psi_N(Z_i)$, which is a triangular array of random variables on $[-1,1]$.
    We will use a weak law of large numbers for triangular arrays, see Theorem 2.2.11 in \citep{durrett2019probability}.
    Both requirements in the Theorem are satisfied since $\xi_{N,i}$ is bounded, and hence we get
    \begin{align*}
        \frac{1}{N} \sum_{i=1}^N & t_i\psi_N(Z_i) - \E[t\psi_N(Z)] 
        \\
        & = \frac{\sum_{i=1}^N \xi_{N,i} - N\E[\xi_{N,i}]}{N} \pto 0,
    \end{align*}
    or equivalently $R_{n,m}(\psi_N) - R(\psi_N) \pto 0$.
    
    But as shown above, $R(\psi_N)$ is $\delta$-close to $R'(\psi_N)$ and $R'(\psi_N)$ is asymptotically $\eta$-close to $R'^*$; hence we get
    \begin{align*}
        \Pr(R(\psi_N) - R'^* \leq \epsilon + \delta + \eta) \to 1
    \end{align*}
    for any $\epsilon > 0$, and therefore
    \begin{align*}
        \Pr(R_{n,m}(\psi_N) - R'^* \leq \epsilon + \delta + \eta) \to 1
    \end{align*}

    \paragraph{5. Divergence of test statistics}
    Define $M_N = 1 - R_{n,m}(\psi_N)$, then
    \begin{align*}
        \Pr(M_N \geq \epsilon^* - \delta - \eta - \epsilon) \to 1
    \end{align*}
    for any $\epsilon > 0$.
    Since $\delta + \eta < \epsilon^*$, we then have for any $r > 0:$
    \begin{align*}
        \Pr\left(\sqrt{\frac{nN}{m}} M_N > r\right) = \Pr\left(M_N > \sqrt{\frac{m}{nN}}r\right) \to 1,
    \end{align*}
    i.e. $M_N \pto +\infty$, since $\sqrt{\frac{m}{nN}} \to 0.$
    
    Next, define
    \begin{align*}
        \hat{S}_{n,m} = \frac{nm}{n+m} \left( \frac{1}{n}\sum_{i=1}^n \psi_N(X_i) - \frac{1}{m}\sum_{i=1}^m \psi_N(Y_i)\right),
    \end{align*}
    i.e. the version of $S_{n,m}$ where the last layer is still selected on the training data instead of the test data.
    Then it holds that
    \begin{align*}
        & \left| \sqrt{\frac{m}{n(m+n)}}\hat{S}_{n,m} -  M_N \right|
        \\
        = &  \left| \frac{1}{m+n}\left(\frac{m}{n}\sum_{i=1}^n \psi_N(X_i) - \sum_{i=1}^m \psi_N(Y_i) \right) \right.
        \\
        & \quad\quad\quad \left. - \frac{1}{m+n}\sum_{i=1}^{m+n}t_i\psi_N(Z_i)\right|
        \\
        & = \left| \frac{1}{m+n}\sum_{i=1}^n \psi_N(X_i)\right| \left|\frac{m}{n} -1 \right|
        \\
        & \leq \left|\frac{m}{n} -1 \right|  \to 0,
    \end{align*}
    since all $|\psi_N(X_i)| \leq 1$ and $\frac{m}{n} \to 1$.
    
    Hence, we also get
    \begin{align*}
        \Pr(\hat{S}_{n,m} > r) \to 1
    \end{align*}
    for any $r > 0$.
    But
    \begin{align*}
        S_{n,m} & = \frac{nm}{n+m} \left|\left| \overline{\phi_N(\XX_n)} - \overline{\phi_N(\YY_m)}\right|\right|^2
        \\
        & = \frac{nm}{n+m} \sup_{||w||\leq 1} w^\top \left( \overline{\phi_N(\XX_n)} - \overline{\phi_N(\YY_m)} \right)
        \\
        & \geq \hat{S}_{n,m}
    \end{align*}
    
    For the DFDA test statistic, we have
    \begin{align*}
        T_{n,m} & = \frac{nm}{n+m}D_{n,m}^\top \hat{\Sigma}_{n,m}^{-1} D_{n,m}
        \\
        & \geq  \frac{nm}{n+m}||D_{n,m}||^2 \lambda_{\min}(\hat{\Sigma}_{n,m}^{-1})
        \\
        & = S_{n,m} \lambda_{\max}(\hat{\Sigma}_{n,m})^{-1}.
    \end{align*}
    $\lambda_{\max}(\hat{\Sigma}_{n,m})$ is always positive (due to the $\rho_{n,m}>0$ summand), and also bounded from above by some $C>0$ (due to the boundedness of all individual entries), therefore $T_{n,m} \geq C^{-1} S_{n,m}$.
    
    Hence we also have $S_{n,m}, T_{n,m} \pto +\infty$.
\end{proof}

\section{DISTRIBUTIONS WITH UNBOUNDED SUPPORT}
\label{sec:unbounded}
Considering the case where $p'$ and $q'$ have unbounded support, but requirements \emph{(ii)}, \emph{(iii)} and a variant of \emph{(i)} in Theorem~\ref{thm-consistency} are still satisfied, we can still prove a similar consistency result.

In particular, we can make $p'$ and $q'$ vary with $N$ by replacing them with truncated, bounded-support versions that converge towards the true densities slowly enough.
First, select $p_N'$ and $q_N'$ with support on $[-B_N, B_N]^d$ for some sequence $B_n \uparrow +\infty$, and $||p_N' - p'||_{L_1(\mu)} \to 0$ and $||q_N' - q'||_{L_1(\mu)} \to 0$.
Then there exists a $N_0 > 0$ such that for all $N \geq N_0$, requirements \emph{(i)}, \emph{(ii)} and \emph{(iii)} are satisfied for $p_N'$ and $q_N'$.
In practice these truncated variables can be achieved for example by rejection sampling from $p'$ and $q'$.

The only part in the proof of Theorem~\ref{thm-consistency} where we need the boundedness assumption on $p'$ and $q'$ is when bounding the Rademacher complexity of the class $\GG_N$ in equation~\ref{eq-rademacher-bound}.
The modified Rademacher bound now is
\begin{align*}
        \hat{\RR}_N(\GG_N) & \leq \frac{B_N (d+1)\left(\sqrt{2\log(2)(D_N -1)} + 1\right)\beta_N}{\sqrt{N}}
        \\
        & \leq C \frac{\sqrt{D_N}B_N\beta_N}{\sqrt{N}}.
\end{align*}
The requirement for the exponent in equation~\eqref{eq-consistency-concentration} to diverge then is
\begin{align*}
    \frac{B_N^2 \beta_N^2 D_N}{N} & \to 0 \mbox{ instead of }
    \\
    \frac{\beta_N^2 D_N}{N} & \to 0.
\end{align*}
The rest of the proof is as before.
We can summarize this as follows:

\begin{theorem}
    Let $p \neq q$, $n=n', m=m'$ with $\frac{n}{m}\to 1$, $N = n + m$, $R'^* = 1 - \epsilon'$ the Bayes error for the transfer task with $\epsilon' > 0$.
    Furthermore, let $||p_N' - p'||_{L_1(\mu)} \to 0$ and $||q_N' - q'||_{L_1(\mu)} \to 0$ for sequences of $\mu$-densities $(p_N')_N$ and $(q_N')_N$,
    
    Assume that the following holds:
    \begin{itemize}
        \item[(i)] $\frac{B_N^2\beta_N^2 D_N}{N} \to 0$, $B_N \to \infty$, $\beta_N \to \infty$ and $D_N \to \infty$ for $N\to\infty$,
        \item[(ii)] $||p - p'||_{L_1(\mu)} + ||q - q'||_{L_1(\mu)} < 2 \delta$,
        \item[(iii)] $0 \leq \delta + \eta < \epsilon'$, where $\eta\geq 0$ is the leniency parameter in training the network, and
        \item[(iv)] for each $N$, $p_N'$ and $q_N'$ have support on $[-B_N, B_N]^d$.
    \end{itemize}
    Then, as $N\to\infty$ both test test statistics $S(\phi_N, \XX_n, \YY_m)$ and $T(\phi_N, \XX_n, \YY_m)$ diverge in probability towards infinity, i.e. for any $r>0$
    \begin{align*}
        & \Pr\left(S(\phi_N, \XX_n, \YY_m) > r\right) \to 1 \mbox{ and }
        \\
        & \Pr\left(T(\phi_N, \XX_n, \YY_m) > r\right) \to 1.
    \end{align*}
\end{theorem}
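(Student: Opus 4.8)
The plan is to rerun the argument of Theorem~\ref{thm-consistency} almost verbatim, with the one structural change that the network $\phi_N$ is now trained on samples drawn from the bounded-support surrogate densities $p_N'$ and $q_N'$ rather than from $p'$ and $q'$ themselves (with $\psi_N = w_N^\top\phi_N \in \GG_N$ selected as in \eqref{eq:erm}). Consequently exactly two places in that proof need to be re-examined: (a) the Rademacher-complexity bound for $\GG_N$, which in Lemma~\ref{lemma-consistency-rademacherbound} uses an almost-sure bound $B$ on the training data and may now be taken to be $B_N$; and (b) every step in which $p'$, $q'$ appear as the transfer task, where we substitute the $N$-dependent pair $(p_N', q_N')$ and absorb the discrepancy using $\|p_N' - p'\|_{L_1(\mu)} \to 0$ and $\|q_N' - q'\|_{L_1(\mu)} \to 0$. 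The observation that makes (b) painless is that $L_1$-convergence of densities forces the transfer risk functionals to converge \emph{uniformly over $[-1,1]$-valued functions}: writing $R_N'(\psi) = 1 - \E_{p_N', q_N'}[t\psi(Z)]$ for the expected risk of the $N$-th surrogate task and $R'(\psi) = 1 - \E_{p', q'}[t\psi(Z)]$ for its limit, one has $\sup_{\psi}|R_N'(\psi) - R'(\psi)| \le \tfrac12\big(\|p_N' - p'\|_{L_1(\mu)} + \|q_N' - q'\|_{L_1(\mu)}\big) \to 0$, the supremum ranging over all measurable $\psi : \R^d \to [-1,1]$.

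First I would handle the bookkeeping around the Bayes errors and the slack in assumption (iii). The Bayes risk of the $N$-th surrogate task is $R_N'^* = 1 - \epsilon_N'$ with $\epsilon_N' = \tfrac12\|p_N' - q_N'\|_{L_1(\mu)}$, and by the triangle inequality $\epsilon_N' \to \tfrac12\|p' - q'\|_{L_1(\mu)} = \epsilon'$; since (iii) is the \emph{strict} inequality $\delta + \eta < \epsilon'$, there are $c > 0$ and $N_1$ with $\epsilon_N' - \delta - \eta \ge c$ for all $N \ge N_1$. Likewise, combining (ii) with the $L_1$-convergence gives $\|p - p_N'\|_{L_1(\mu)} + \|q - q_N'\|_{L_1(\mu)} < 2\delta$ for $N$ large, so the transfer-error step of Theorem~\ref{thm-consistency} still yields $|R(\psi_N) - R_N'(\psi_N)| \le \tfrac12\big(\|p - p_N'\|_{L_1(\mu)} + \|q - q_N'\|_{L_1(\mu)}\big) \le \delta$ eventually, where $R$ is the expected risk under the testing task $(p,q)$. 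For the approximation-error step I would pick the measurable $h : \R^d \to [-1,1]$ with $|R'(h) - R'^*| \le \epsilon/4$, the compact set $K$, and the continuous $f$ all with respect to the \emph{limit} task $(p',q')$; the universal approximation theorem of \citet{hanin2017universal} then supplies $\psi \in \GG_N$ for $N$ large, and replacing the $\hat\mu$-integrals by $\hat\mu_N$-integrals costs only $o(1)$ because $h, f, \psi$ are all bounded by $1$ and $\|(p_N' + q_N') - (p' + q')\|_{L_1(\mu)} \to 0$. Together with $R_N'^* \to R'^*$ this gives $\min_{\psi \in \GG_N} R_N'(\psi) \to R_N'^*$.

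For the estimation-error step, Lemma~\ref{lemma-consistency-rademacherbound} applied with $B = B_N$ gives $\hat{\RR}_N(\GG_N) \le C\sqrt{D_N}\,B_N\,\beta_N/\sqrt{N}$ for $D_N$ large, so in the concentration bound \eqref{eq-consistency-concentration} the exponent's leading term becomes $N\epsilon^2 - 4C\sqrt{N}\sqrt{D_N}B_N\beta_N$, which diverges to $+\infty$ exactly when $B_N^2\beta_N^2 D_N/N \to 0$, i.e.\ under assumption (i); compare \eqref{eq-rademacher-bound}. Hence $R_N'(\psi_N)$ is asymptotically $\eta$-close to $\min_{\GG_N}R_N'$ in probability, and chaining the three estimates gives $\Pr\big(R(\psi_N) - R_N'^* \le \epsilon + \delta + \eta\big) \to 1$ for every $\epsilon > 0$. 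From here the remaining steps of the proof of Theorem~\ref{thm-consistency} go through without change: the weak law of large numbers for triangular arrays (Theorem~2.2.11 in \citet{durrett2019probability}) gives $R_{n,m}(\psi_N) - R(\psi_N) \pto 0$, so $M_N := 1 - R_{n,m}(\psi_N)$ exceeds $c/2$ with probability tending to $1$; consequently $\sqrt{nN/m}\,M_N$ diverges in probability, the auxiliary statistic $\hat{S}_{n,m}$ satisfies $\big|\sqrt{m/(n(m+n))}\,\hat{S}_{n,m} - M_N\big| \le |m/n - 1| \to 0$, and finally $S_{n,m} \ge \hat{S}_{n,m}$ while $T_{n,m} \ge S_{n,m}/\lambda_{\max}(\hat{\Sigma}_{n,m})$ with $\lambda_{\max}(\hat{\Sigma}_{n,m})$ bounded above, so both test statistics diverge in probability.

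I expect the main obstacle to be not the Rademacher rate (that is a one-line substitution of $B_N$ for $B$) but the careful verification that the universal-approximation and estimation-error arguments, which Theorem~\ref{thm-consistency} states for a \emph{fixed} transfer task, survive the drift of $(p_N', q_N')$; the resolution is precisely the uniform-over-$[-1,1]$-functions convergence of the transfer risk functionals noted in the first paragraph, together with the room left by the strict inequality in (iii), which keeps the gap $\epsilon_N' - \delta - \eta$ bounded away from $0$.
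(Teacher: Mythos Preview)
Your proposal is correct and follows the same route as the paper: substitute $B_N$ for $B$ in the Rademacher bound of Lemma~\ref{lemma-consistency-rademacherbound}, so that the exponent in \eqref{eq-consistency-concentration} diverges precisely when $B_N^2\beta_N^2 D_N/N \to 0$, and otherwise rerun the proof of Theorem~\ref{thm-consistency}. The paper's own argument in Appendix~\ref{sec:unbounded} is in fact terser than yours---it simply asserts that the Rademacher step is the \emph{only} place where boundedness enters and that ``the rest of the proof is as before''---whereas you explicitly verify that the drift of $(p_N',q_N')$ is harmless via the uniform-over-$[-1,1]$-functions convergence $\sup_\psi |R_N'(\psi)-R'(\psi)|\to 0$ and the strict slack in (iii); this extra care is warranted and fills in what the paper leaves implicit. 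One cosmetic point: when you write ``$\min_{\psi\in\GG_N} R_N'(\psi) \to R_N'^*$'' you mean the difference tends to zero, since both sides move with $N$.
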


\section{DIMENSIONALITY REDUCTION}
\label{app-pca}
In practice, we oftentimes first apply a PCA transformation on the data before computing the DFDA test statistic.
Since we fit the PCA on the test data itself, however, the observations are not independent anymore and Theorem~\ref{thm-pvals} is not directly applicable anymore.
As an unsupervised linear transformation, however, we can show via a Slutsky-type argument that the normal approximation is still valid.

\begin{theorem}
\label{thm-pca}
Let $(\xi_i)_i$ and $(\xi_i')_i$ be all jointly independent and identically distributed on $\R^d$ with bounded support and assume that $\frac{n}{n+m} \to r \in (0, 1)$ as $n, m \to \infty$.

Let $A_N \in \R^{s \times d}$ be a PCA transform, fitted on $\xi_1, \ldots, \xi_n, \xi_1', \ldots, \xi_m'$ ($N = m+n$), for some $s \in \{1, \ldots, d\}$.
Let $\Sigma = \Cov(\xi_1)$ with eigenvalues $\lambda_1, \ldots, \lambda_d$ sorted in descending order, and assume that $\lambda_s \neq \lambda_{s+1}$ (if $s < d$).

Then
\begin{align*}
    \sqrt{\frac{nm}{n+m}} \left(\frac{1}{n}\sum_{i=1}^n A_N \xi_i - \frac{1}{m}\sum_{i=1}^m A_N \xi_i' \right) \dto \NN(0, \Sigma')
\end{align*}
as $n, m \to \infty$, where $\Sigma' = \mbox{diag}(\lambda_1, \ldots, \lambda_s)$.
\end{theorem}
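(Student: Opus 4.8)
The plan is to decompose $A_N = A_\infty + (A_N - A_\infty)$, where $A_\infty \in \R^{s\times d}$ is the (deterministic) PCA projection onto the top-$s$ eigenspace of the true covariance $\Sigma$, and to show that the $A_\infty$-part gives the claimed limit via Theorem~\ref{thm-pvals}(i), while the $(A_N - A_\infty)$-part is asymptotically negligible by a Slutsky-type argument. First I would write
\begin{align*}
\sqrt{\tfrac{nm}{n+m}}\Big(\tfrac{1}{n}\textstyle\sum_i A_N\xi_i - \tfrac{1}{m}\sum_i A_N\xi_i'\Big)
= A_N \cdot \sqrt{\tfrac{nm}{n+m}}\, D_N,
\end{align*}
where $D_N := \tfrac1n\sum_{i=1}^n \xi_i - \tfrac1m\sum_{i=1}^m \xi_i'$. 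Since the $\xi_i,\xi_i'$ are i.i.d.\ with bounded support (hence finite moments) and $\tfrac{n}{n+m}\to r$, the exact argument of Theorem~\ref{thm-pvals}(i) applied to the identity feature map shows $\sqrt{\tfrac{nm}{n+m}}D_N \dto \NN(0,\Sigma)$.

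Next I would establish consistency of the PCA projector, i.e.\ that $A_N$ can be chosen so that $A_N \pto A_\infty$ in Frobenius norm. The empirical covariance matrix $\hat\Sigma_N$ computed from the pooled sample converges in probability to $\Sigma$ by the weak law of large numbers (again using bounded support). The eigenvectors are not continuous functions of the matrix in general, but the orthogonal \emph{projection} onto the span of the top-$s$ eigenvectors is continuous at any matrix whose $s$-th and $(s{+}1)$-st eigenvalues are distinct — this is where the hypothesis $\lambda_s \neq \lambda_{s+1}$ (for $s<d$) enters, via e.g.\ the Davis--Kahan $\sin\Theta$ theorem. So the projection $P_N := A_N^\top A_N \pto P_\infty := A_\infty^\top A_\infty$. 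One subtlety is that PCA only determines the projector, not the orthonormal basis / signs of rows of $A_N$; I would fix a measurable choice of $A_N$ (e.g.\ aligning signs with $A_\infty$) so that $A_N \pto A_\infty$ as a matrix, which is legitimate since the target limit $\Sigma' = \mathrm{diag}(\lambda_1,\dots,\lambda_s)$ is itself basis-dependent only up to the orthogonal group and we are free to adopt the canonical eigenbasis ordering.

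Finally I would combine the two pieces by Slutsky's theorem: writing $V_N := \sqrt{\tfrac{nm}{n+m}}D_N$, we have $V_N \dto \NN(0,\Sigma)$ and $A_N \pto A_\infty$, so $A_N V_N = A_\infty V_N + (A_N - A_\infty)V_N$; the first term converges in distribution to $A_\infty\,\NN(0,\Sigma) = \NN(0, A_\infty \Sigma A_\infty^\top)$ by the continuous mapping theorem, and the second is $o_p(1)$ because $\|A_N - A_\infty\|_{op}\pto 0$ while $V_N$ is $O_p(1)$ (tight). A direct computation with $A_\infty$ the matrix whose rows are the unit top-$s$ eigenvectors of $\Sigma$ gives $A_\infty \Sigma A_\infty^\top = \mathrm{diag}(\lambda_1,\dots,\lambda_s) = \Sigma'$, completing the argument. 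The main obstacle I anticipate is the basis/sign ambiguity of PCA: making precise the sense in which "$A_N \to A_\infty$" so that both the Slutsky step and the identification of the limiting covariance go through cleanly — as opposed to merely the projectors converging — is the one place where care is genuinely needed, and the $\lambda_s\neq\lambda_{s+1}$ gap assumption is exactly what makes it tractable.
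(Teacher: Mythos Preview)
Your proposal is correct and follows essentially the same route as the paper: factor the statistic as $A_N$ times the unprojected centered difference, apply the CLT of Theorem~\ref{thm-pvals}(i) to the latter, establish $A_N \pto A_\infty$ from $\hat\Sigma_N \pto \Sigma$ (the paper handles the sign/basis ambiguity with the same deterministic sign convention you describe), combine via Slutsky, and identify $A_\infty \Sigma A_\infty^\top = \Sigma'$. Your explicit appeal to Davis--Kahan and the projector $P_N$ is a bit more careful than the paper's one-line ``accordingly $A_N \pto A$,'' but the argument is the same.
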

Note that the $\lambda_s \neq \lambda_{s+1}$ assumption is only necessary for uniqueness of the limiting distribution -- in practice one can ignore this requirement.

\begin{proof}[Proof of Theorem~\ref{thm-pca}]
    Since $A_N$ is a PCA transformation, $A_N$ is the matrix with the normalized eigenvectors corresponding to the $s$ largest eigenvalues of the empirical covariance matrix $\tilde{\Sigma}_{n,m}$.
    But, due to a weak law of large numbers, $\tilde{\Sigma}_{n,m} \pto \Sigma$ and accordingly $A_N \pto A$ with the population PCA $A$ being the normalized eigenvectors corresponding to the $s$ largest eigenvalues of $\Sigma$ (without loss of generality we can assume the row-wise signs to be determined by some deterministic procedure, and hence for large enough $N$, $A_N$ and $A$  unique e.g. by requiring that the first non-zero entry in the vector be positive).
    
    Due to the same argument as in the proof of Theorem~\ref{thm-pvals} \emph{(i)}, 
    \begin{align*}
        \sqrt{\frac{nm}{n+m}} \left(\frac{1}{n}\sum_{i=1}^n \xi_i - \frac{1}{m}\sum_{i=1}^m \xi_i' \right) \dto \NN(0, \Sigma).
    \end{align*}
    
    Due to a multivariate Slutsky theorem, then
    \begin{align*}
        \sqrt{\frac{nm}{n+m}} &  \left(\frac{1}{n}\sum_{i=1}^n A_N \xi_i - \frac{1}{m}\sum_{i=1}^m A_N \xi_i' \right)
        \\
        & = A_N \sqrt{\frac{nm}{n+m}} \left(\frac{1}{n}\sum_{i=1}^n \xi_i - \frac{1}{m}\sum_{i=1}^m \xi_i' \right)
        \\
        & \dto \NN(0, A\Sigma A^\top).
    \end{align*}
    But as $A$ consists of the orthogonal eigenvectors of $\Sigma$ in descending order of eigenvalues, $A\Sigma A^\top = \Sigma'$.
    
\end{proof}

\section{ADDITIONAL EMPIRICAL ANALYSIS}
\label{sec:additionalexp}

\subsection{Parameters for SCF and ME tests}
For the SCF and ME test, hyperparameters have to be chosen, namely the number of locations/frequencies at which to test, the kernel-selection strategy and whether to optimize over the frequencies/locations or to use a simple heuristic.
We found that if the number of locations/frequencies $J$ is chosen too large, the tests oftentimes strongly violate the significance level.
Hence, we grow $J$ with the number of samples according to what still gives reasonable type-1 error rates.

\paragraph{AM Audio Data}
Here we use the `full` version of the parameter selection from \citep{jitkrittum2016interpretable} for both tests.
Number of frequencies/locations were set to $J = 1$ when $m \in [10, 50]$, $J=3$ for $m \in [75, 150]$ and $J = 10$ for $m \in [200, 1000]$.

\paragraph{Aircraft, Dogs and Birds Data}
For SCF we found the random location initialization without kernel optimization (and hence without data split) to work best.
For ME, due to the high dimensionality, we selected the `grid` version of the parameter optimization; the `full` version did not seem to give considerable improvements above this.
For the Aircraft and Dogs data, we selected $J=1$ frequencies/locations for $m\in[10,50]$ and $J=3$ for $m \in [50, 200]$.
For the Birds data we always use $J=1$ ($m\in[10,60])$.

\paragraph{Facial Expression Data}
Again we use random locations for SCF and grid-search kernel width for ME.
For SCF, we fix $J=1$ for all $m\in[10,200]$.
For ME, we choose $J=1$ for $m\in[10, 50]$, $J=3$ for $m\in[75,100]$ and $J=10$ for $m\in[150,200]$.

\subsection{Image Preprocessing}
\label{sec:preprocessing}
For the deep learning-based methods (DFDA, DMMD \& C2ST), before evaluation, all image data is rescaled to (224, 224) and normalized according to the requirements of the neural network.

For kernel-based tests we found different strategies to work differently well on each data set.
Hence, for the Aircraft, Stanford Dogs and Birds data set, data is rescaled to (48, 48) dimensions and converted to grayscale.
For the facial expression data, images were first cropped to the center (resulting in (462, 462) dimensions) and then rescaled to (96, 96) dimensions; no conversion to grayscale was performed.
We found no increase in power for higher resolution (e.g. (224, 224)).

\subsection{Sensitivity to Significance Level}
\label{sec:sensitivity}
\begin{figure*}[t]
\centering
    \begin{subfigure}[b]{0.45\textwidth}       
        \includegraphics[width=\textwidth]{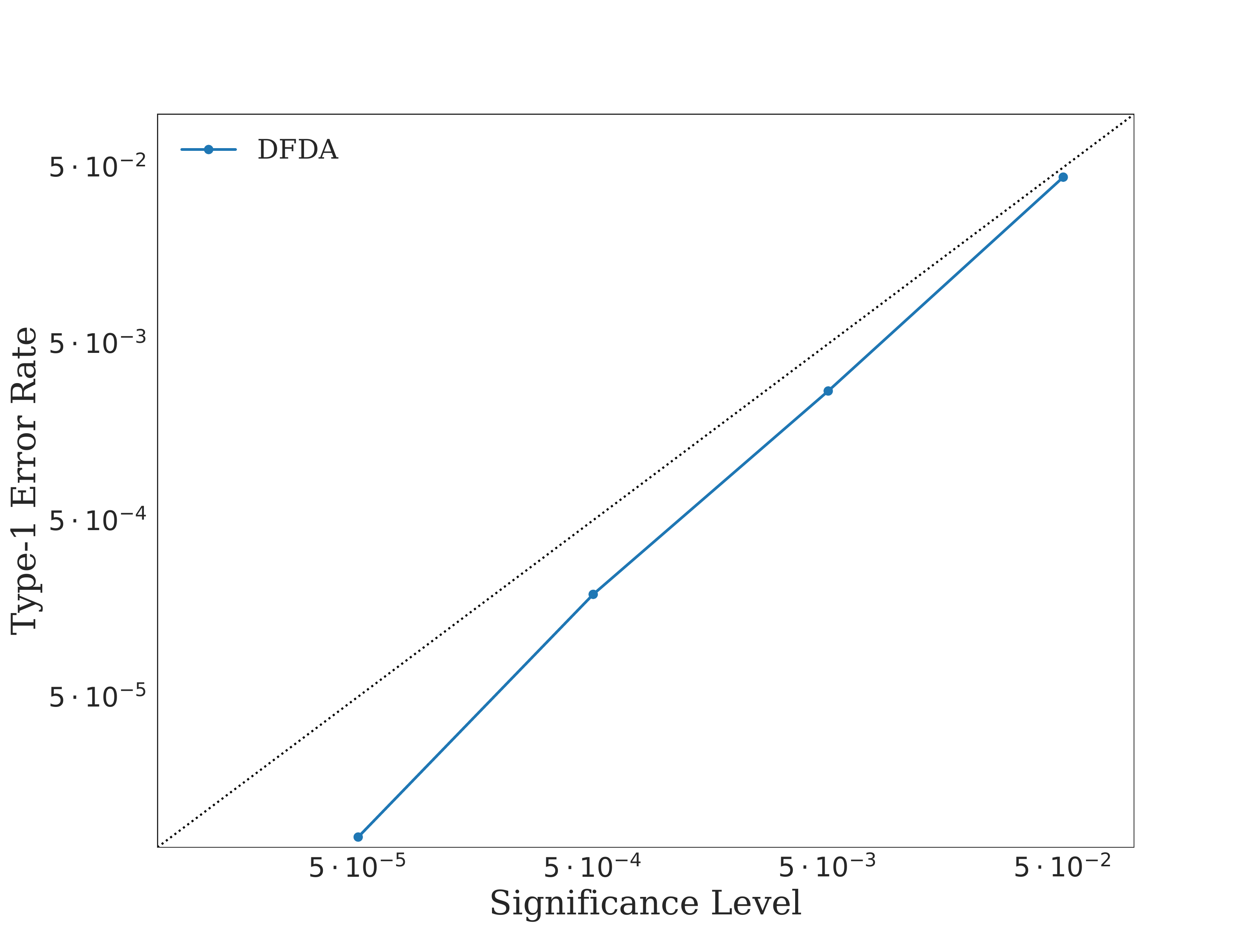}
        \caption{Type-1 error rate at low significance levels, $m=50$.}
        \label{fig:alpha-t1er}
    \end{subfigure}
    \begin{subfigure}[b]{0.45\textwidth}
        \includegraphics[width=\textwidth]{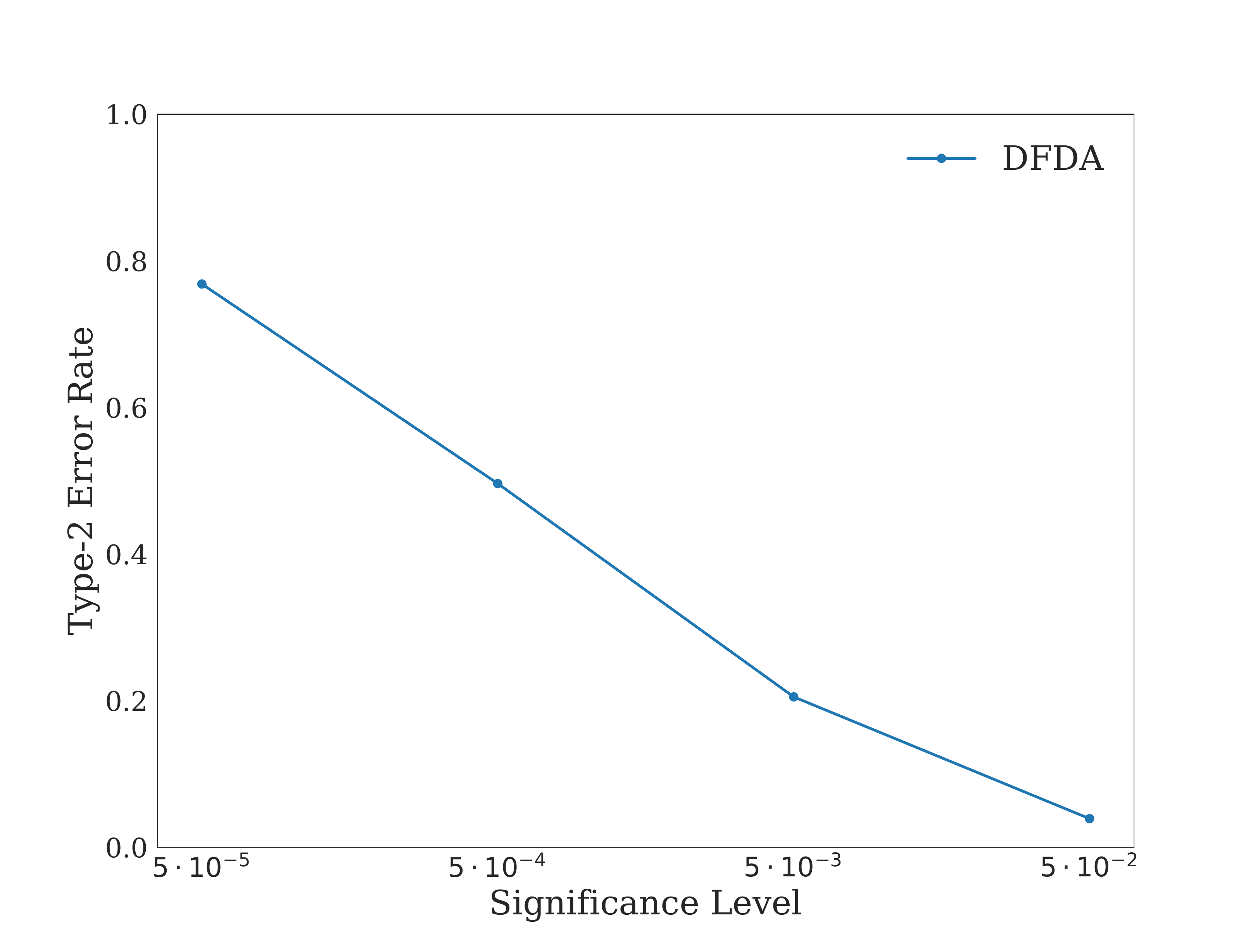}
        \caption{Type-2 error rate at low significance levels, $m=50$.}
        \label{fig:alpha-t2er}
    \end{subfigure}
\caption{Results on the AM audio data for $m=50$ with small significance levels $\alpha$. We show average values over $10^6$ tests, where we fixed the sample size $m$ per population to be equal to 50. (a) Empirical type-1 error rates for small $\alpha$ values consistently lie below the expected type-1 error rate (dotted line). (b) Empirical type-2 error rates. }
\label{fig:alphalevel}
\end{figure*}

    Special care has to be taken if several hypotheses are tested at the same time, leading to a \emph{multiple testing problem}.
    One simple approach to control the so-called familywise error rate (FWER, \citep{lehmann2006testing}), i.e., the probability of at least one wrong rejection of a null hypothesis, is the Bonferroni correction \citep{lehmann2006testing}. 
    The Bonferroni correction divides the original significance level $\alpha$ by the number of tests to be performed.
    Therefore, in many practical settings the significance level for each test will be considerably lower than the ``standard`` values of $0.05$ or $0.01$.
    This represents a problem in practice, since approximating the distribution in the tails usually is more challenging.
    Here we only give results for the asymptotic DFDA distribution, since permutation-based methods do not scale well to very low significance levels.
    Figure~\ref{fig:alpha-t1er} shows that our method controls type-1 error rate at significance levels $5\cdot 10^{-u}$ for $u=2,3,4,5$; Figure~\ref{fig:alpha-t2er} shows that even at small significance levels, the DFDA can still maintain relatively high power.

\subsection{Control of Type-1 Error Rate}
\label{sec:t1er}
Figure~\ref{fig:t1er-appendix} shows that both DMMD and DFDA properly control the type-1 error rate even at low sample sizes.
\begin{figure*}[ht]
\centering
    \begin{subfigure}[b]{0.45\textwidth}       
        \includegraphics[width=\textwidth]{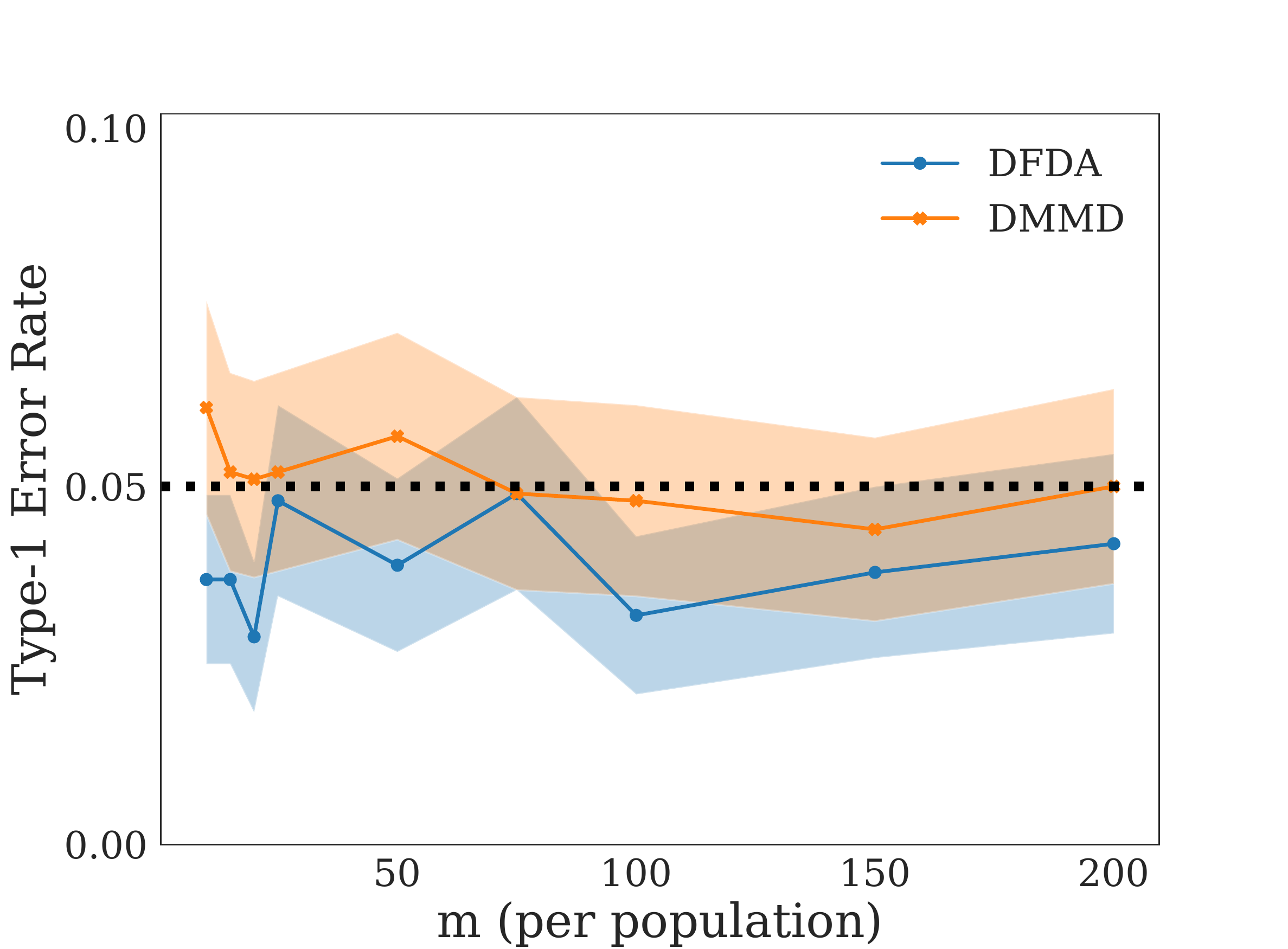}
        \caption{Type-1 error rate on Aircraft data set.}
        \label{fig:planes-t1er}
    \end{subfigure}
    \begin{subfigure}[b]{0.45\textwidth}
        \includegraphics[width=\textwidth]{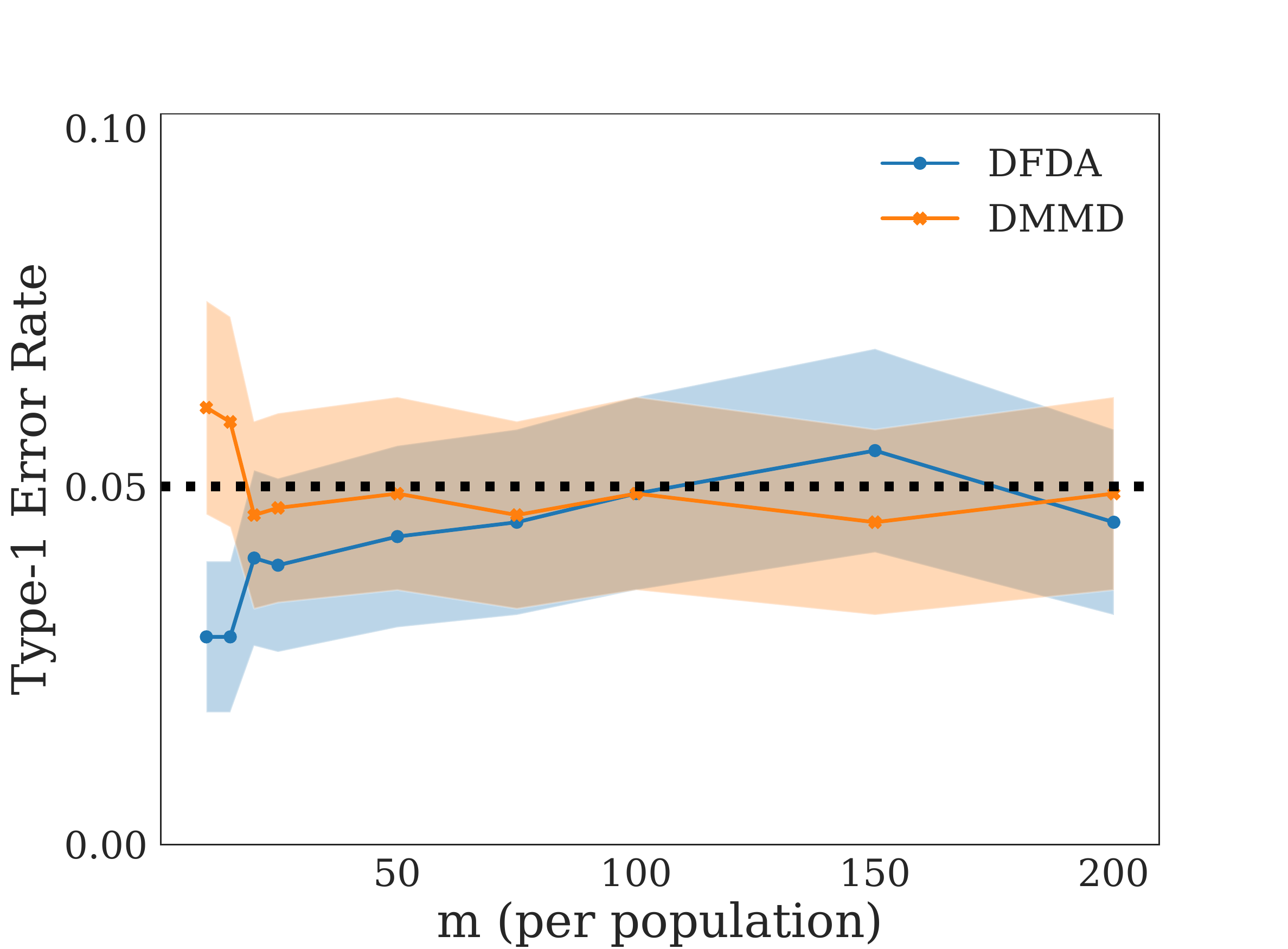}
        \caption{Type-1 error rate on facial expression data set.}
        \label{fig:faces-t1er}
    \end{subfigure}
    \begin{subfigure}[b]{0.45\textwidth}
        \includegraphics[width=\textwidth]{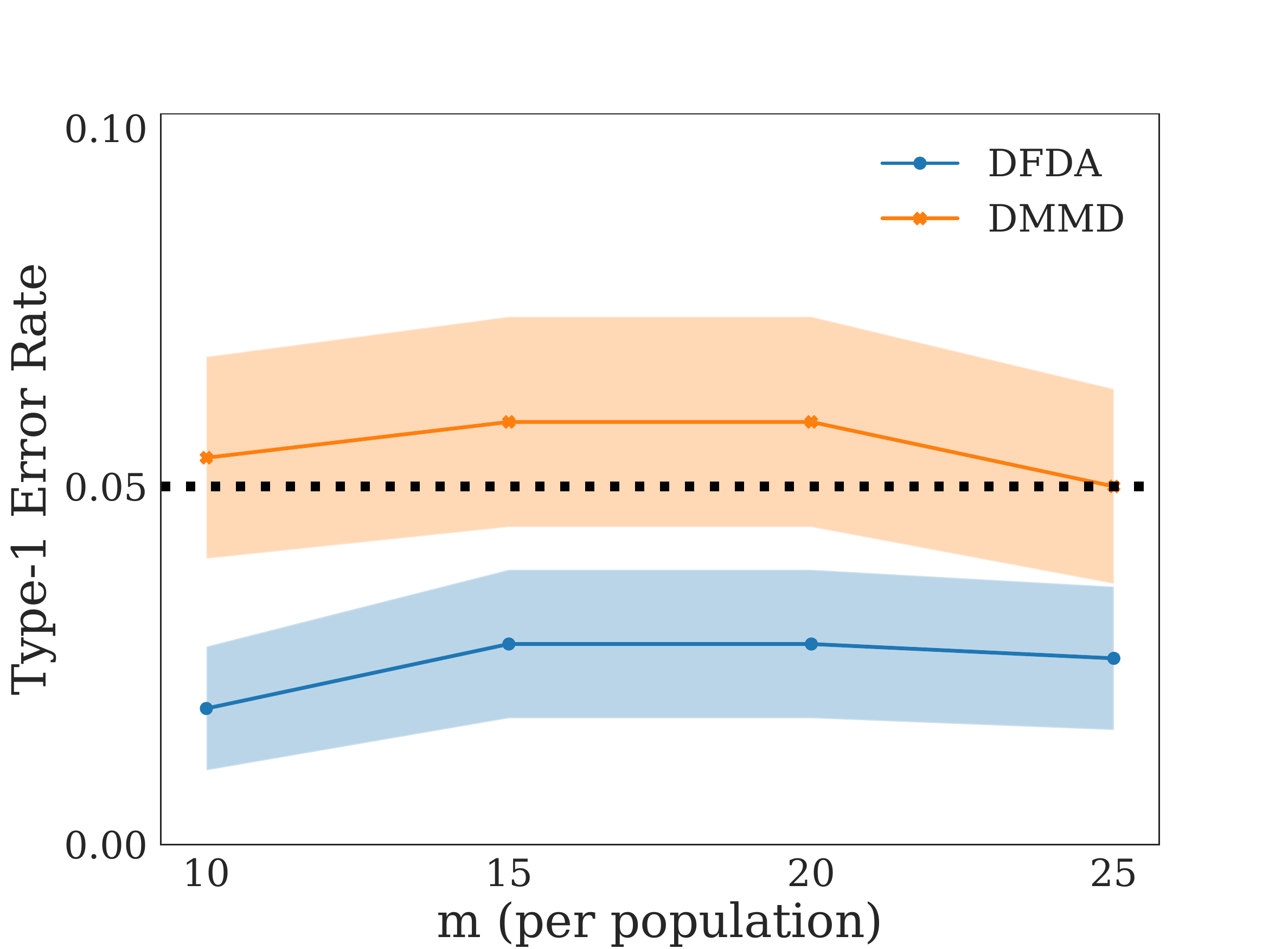}
        \caption{Type-1 error rate on Birds data set.}
        \label{fig:birds-t1er}
    \end{subfigure}
    \begin{subfigure}[b]{0.45\textwidth}
        \includegraphics[width=\textwidth]{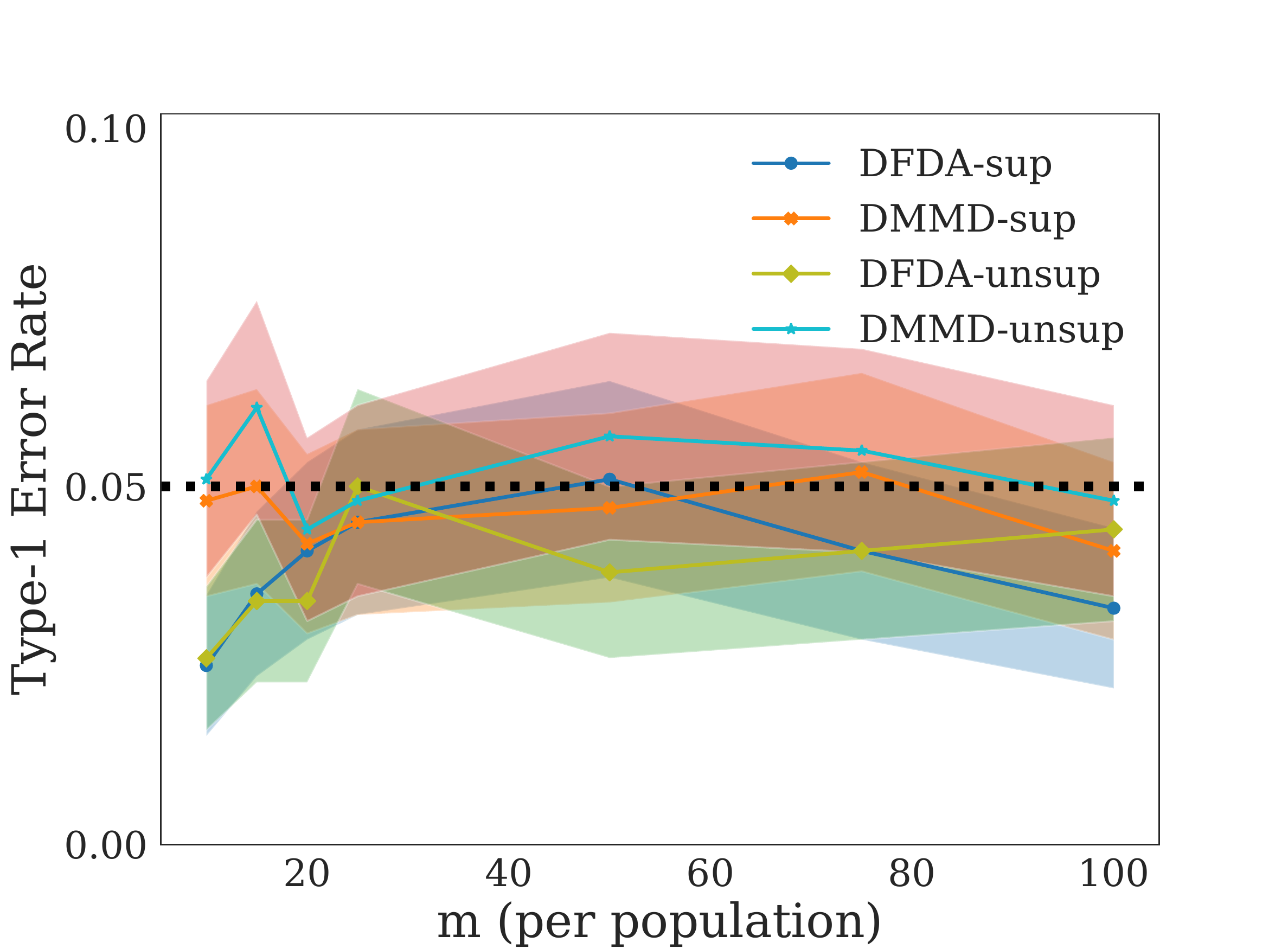}
        \caption{Type-1 error rate on Stanford Dogs data set.}
        \label{fig:dogs-t1er}
    \end{subfigure}
\caption{Empirical control of type-1 error rate on vision data sets.}
\label{fig:t1er-appendix}
\end{figure*}

\subsection{Birds Experiments}
\begin{figure*}[h]
\centering
    \begin{subfigure}[b]{0.48\textwidth}
        \includegraphics[width=\textwidth]{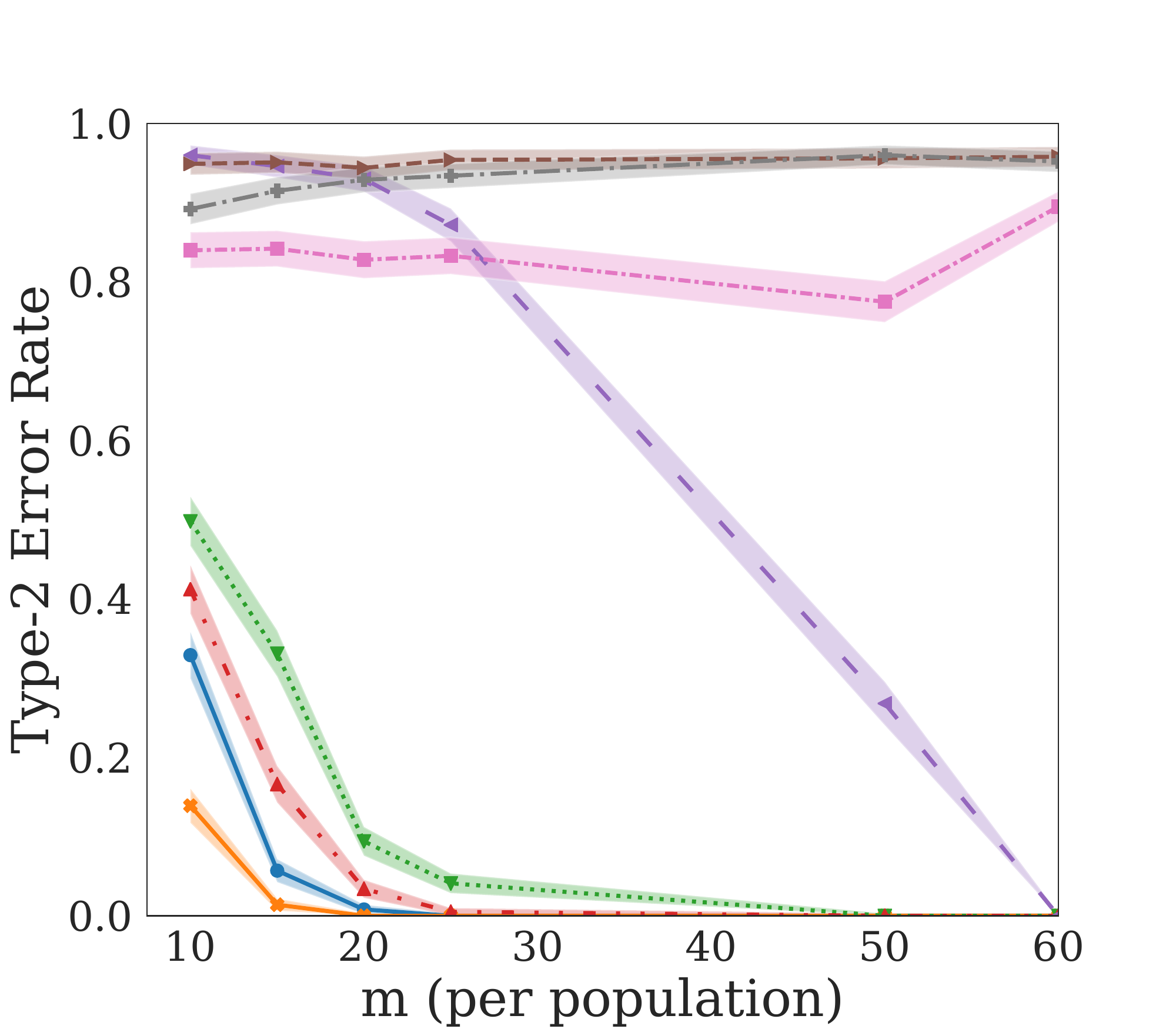}

    \end{subfigure}
    \begin{subfigure}[b]{0.48\textwidth}
        \includegraphics[width=0.6\textwidth]{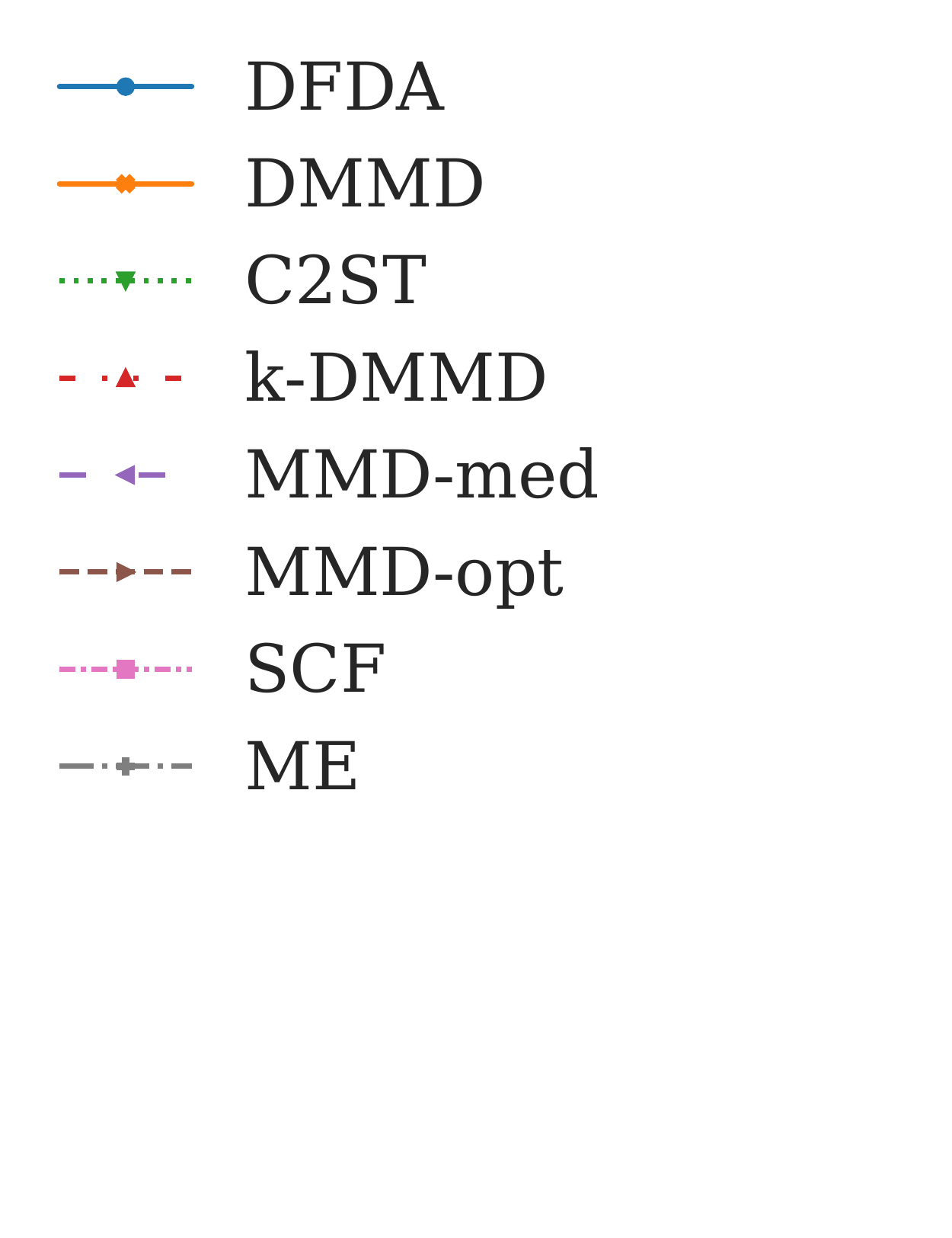}
    \end{subfigure}
            \caption{Type-2 error rate on Birds data set.}
        \label{fig:birds}
\end{figure*}

Here we report results on another fine-grained classification data set, the Caltech-UCSD Birds-200-2011, Caltech-UCSD Birds-200-2011 \citep{WahCUB_200_2011}.
We selected two visually very similar species of birds, namely the ``Blue-winged Warbler`` and the ``Hooded Warbler`` for differentiation.
Results are shown in Figure~\ref{fig:birds}.

\subsection{AM Audio Experiments}
\label{sec:audio-appendix}
Data preprocessing consists of sampling the original audio signal at 8kHz, the resulting AM signal is sampled at 120kHz, and snippets of length 1000 are used for identification.
Gaussian noise with standard deviation 1 is added to the samples after processing.

The model has four one-dimensional convolutional layers, each followed by Batch normalization, a ReLU activation and max-pooling.
The last layer is fully connected, but only used for training the network, i.e., the feature extraction is fully convolutional.
In contrast to the M5 network, we use an input layer with kernel size of 20 instead of 80 and the final global average pooling layer can be removed, to accommodate the significantly smaller input dimension of the audio snippets.
We train the network to classify noisy AM snippets from the remaining songs on the album, with a multi-class cross-entropy loss and a $L_2$-regularization of $10^{-4}$ on all weights; we use the Adam optimizer for this task \cite{kingma2014adam}.

\subsection{Stanford Dogs Experiments}
\label{sec:dogs-appendix}

Table~\ref{tab:cae-dogs} shows the convolutional autoencoder architecture used in the experiments on the Stanford Dogs data set.
The autoencoder was trained to optimize multi-scale structural similarity between input and output images.

The supervised training was performed with a network with the same encoder as in Table~\ref{tab:cae-dogs} and a fully connected layer on top, to classify the remaining 118 dog breeds.
Again, we use the multi-class cross-entropy loss.

For both the supervised and the unsupervised task we use the Adam optimizer and $L_2$ regularization of size $10^{-4}$.

\subsection{KDEF Experiments}
Note that \citet{jitkrittum2016interpretable} and \citet{lopez2016revisiting} only compared tests that use train/test splits.
Hence, results therein are reported for $n_{te}$, which is the size of the test set of each sample, i.e. $n_{te} = \frac{1}{2} m$ in our case ($n_{te} = 201$ corresponds to $m = 402$).

\subsection{Imagenet Training}
For the aircraft, facial expression, and birds data set we use a ResNet-152, trained on the whole ILSVRC  2012 data set.
Instead of training this network ourselves, we use the parameters and implementation provided in the PyTorch deep learning library \cite{paszke2017automatic}.

\begin{table}[t]
  \caption{Architecture of the convolutional autoencoder used for the Stanford Dogs experiments. For Conv and ConvTranspose layers, $[3\times3, f]$ denotes $f$ $3\times 3$ filters. Activation functions are always ReLUs except for the last convolutional layer (tanh) and the last ConvTranspose layer (sigmoid). After each Conv and ConvTranspose operation, a BatchNorm \citep{ioffe2015batch} operation was used. The output of the encoder part was used as feature map in our tests.}
  \label{tab:cae-dogs}
  \centering
  \begin{tabular}{lc}
    \toprule
\multicolumn{2}{c}{Input: $(3, 224, 224)$ image} \\
\cmidrule{1-2}
\multicolumn{2}{c}{Encoder} \\
\cmidrule{1-2}
Conv & $[3\times3, 40]$ \\
MaxPool & $[2\times 2]$ \\
Conv & $[3\times3, 80]$ \\
MaxPool  & $[2\times 2]$\\
Conv & $[3\times3, 160]$ \\
MaxPool & $[2\times 2]$\\
Conv & $[3\times3, 240]$ \\
MaxPool &  $[2\times 2]$\\
Conv & $[3\times3, 360]$ \\
MaxPool &  $[2\times 2]$\\
Conv &  $[3\times3, 2048]$ \\
MaxPool &   $[2\times 2]$\\
\cmidrule{1-2}
\multicolumn{2}{c}{Decoder}\\
\cmidrule{1-2}
ConvTranspose & $[3\times3, 360]$ \\
Upsample &  $[2\times 2]$ \\
ConvTranspose & $[3\times3, 240]$ \\
Upsample &  $[2\times 2]$ \\
ConvTranspose & $[3\times3, 160]$ \\
Upsample &  $[2\times 2]$ \\
ConvTranspose & $[3\times3, 80]$ \\
Upsample &  $[2\times 2]$ \\
ConvTranspose & $[3\times3, 40]$ \\
Upsample & $[2\times 2]$ \\
ConvTranspose & $[3\times3, 3]$\\
    \bottomrule
  \end{tabular}
\end{table}
\subsection{MRI Scan Preprocessing and Experiments}
\label{sec:mri-appendix}

 The T1 MRI scans acquired through the MP-RAGE protocol were selected from GSP and ADNI. The scans were standardized to $(256, 256, 256)$ and cropped to $(96, 96, 96)$ dimensions with isotropic voxels of 1mm. Model architecture is shown in table~\ref{table:3DCAE}. The model was trained for 400 epochs on 1413 MRI scans from GSP. The loss function was set to the mean squared error and the batch size was set to one. No MRI scans from ADNI was used for training.
 
 In our experiments, the \emph{APOE} gene was used since it is known to be a risk factor for Alzheimer's disease; in practice, when one does not know which locus to test, a multistep-approach such as the one developed by \citet{mieth2016combining} can be used to create a selection of candidate loci.

\begin{table}[t]
  \caption{Architecture of the 3D convolutional autoencoder for the MRI data. For Conv and ConvTranspose layers, $[3\times3\times3, s, f]$ denotes $f$ $3\times3\times3$ filters with strides of $s$. Activation functions are always ReLUs except for the last convolutional layer (linear). All convolutional operations are done without padding. The output of the encoder ($1024$ dimensions) is used as feature map in our tests.}
  \label{table:3DCAE}
  \centering
  \begin{tabular}{ll}
    \toprule
    \multicolumn{2}{c}{Input: $(96, 96, 96)$ MRI scan}  \\
    \midrule
\multicolumn{2}{c}{Encoder} \\
\cmidrule(r){1-2}
Conv & $[3\times3\times3, 1, 8]$ \\
Conv & $[2\times2\times2, 2, 16]$\\
Conv & $[3\times3\times3, 1, 32]$\\
Conv & $[2\times2\times2, 2, 64]$\\
Conv & $[2\times2\times2, 2, 128]$\\
Conv & $[2\times2\times2, 2, 256]$\\
Conv & $[2\times2\times2, 2, 256]$\\
Dense\\
\cmidrule(r){1-2}
\multicolumn{2}{c}{Decoder}\\
\cmidrule(r){1-2}
Dense \\
Conv & $[3\times3\times3, 1, 256]$\\
ConvTranspose & $[2\times2\times2, 2, 256]$\\
ConvTranspose & $[2\times2\times2, 2, 128]$\\
ConvTranspose & $[2\times2\times2, 2, 64]$\\
ConvTranspose & $[2\times2\times2, 2, 32]$\\
Conv & $[3\times3\times3, 1, 16]$\\
ConvTranspose & $[2\times2\times2, 2, 8]$\\
Conv & $[3\times3\times3, 1, 1]$\\
    \bottomrule
  \end{tabular}
\end{table}

\section{CODE AND DATA}
We provide an implementation of our methods at \url{https://github.com/mkirchler/deep-2-sample-test}.

All 2D imaging and audio data can be downloaded from the following sources:
\begin{itemize}
    \item Audio data: \url{http://dl.lowtempmusic.com/Gramatik-TAOR.zip}
    \item Aircraft data: \url{http://www.robots.ox.ac.uk/~vgg/data/fgvc-aircraft/archives/fgvc-aircraft-2013b.tar.gz}
    \item Facial Expression data: \url{http://kdef.se/index.html}
    \item Stanford Dogs data: \url{http://vision.stanford.edu/aditya86/ImageNetDogs/images.tar}
    \item Birds data: \url{http://www.vision.caltech.edu/visipedia-data/CUB-200-2011/CUB_200_2011.tgz}
\end{itemize}
For MRI imaging data access to data has to be granted by the releasing institutions, see
\begin{itemize}
    \item GSP: \url{https://www.neuroinfo.org/gsp}
    \item ADNI: \url{http://adni.loni.usc.edu/data-samples/access-data/}
\end{itemize}

\end{document}